\theoremstyle{plain}
\newtheorem{theorem}{Theorem}[section]
\newtheorem{proposition}[theorem]{Proposition}
\newtheorem{lemma}[theorem]{Lemma}
\theoremstyle{definition}
\newtheorem{definition}[theorem]{Definition}
\newtheorem{assumption}[theorem]{Assumption}
\theoremstyle{remark}
\newtheorem{remark}[theorem]{Remark}
\newcommand{\rE}{\mathbb E}
\newcommand{\E}{\mathbb E}
\newcommand{\KL}{\mathrm{KL}}
\newcommand{\GL}{\mathrm{GL}}
\definecolor{LightCyan}{rgb}{0.5, 0.65, 1}
\newcommand{\reff}{\mathrm{ref}}
\let\hat\widehat
\let\tilde\widetilde
\newcommand{\cA}{\mathcal{A}}
\newcommand{\cB}{\mathcal{B}}
\newcommand{\cD}{\mathcal{D}}
\newcommand{\cE}{\mathcal{E}}
\newcommand{\cF}{\mathcal{F}}
\newcommand{\cM}{\mathcal{M}}
\newcommand{\cO}{\mathcal{O}}
\newcommand{\cP}{\mathcal{P}}
\newcommand{\cQ}{\mathcal{Q}}
\newcommand{\cR}{\mathcal{R}}
\newcommand{\cS}{{\mathcal{S}}}
\newcommand{\cT}{{\mathcal{T}}}
\newcommand{\cV}{\mathcal{V}}
\newcommand{\cX}{\mathcal{X}}
\newcommand{\cZ}{\mathcal{Z}}
\newcommand{\EE}{\mathbb{E}}
\newcommand{\PP}{\mathbb{P}}
\newcommand{\RR}{\mathbb{R}}
\newcommand{\hQ}{\hat{Q}}
\newcommand{\hV}{\hat{V}}
\newcommand{\hpi}{\hat{\pi}}
\newcommand{\hf}{\hat{f}}
\newcommand{\argmin}{\mathop{\mathrm{argmin}}}
\newcommand{\opt}{\mathrm{opt}}
\icmltitlerunning{Logarithmic Regret for Online KL-Regularized Reinforcement Learning}
\begin{document}

\twocolumn[
\icmltitle{Logarithmic Regret for Online KL-Regularized Reinforcement Learning}

% It is OKAY to include author information, even for blind
% submissions: the style file will automatically remove it for you
% unless you've provided the [accepted] option to the icml2025
% package.

% List of affiliations: The first argument should be a (short)
% identifier you will use later to specify author affiliations
% Academic affiliations should list Department, University, City, Region, Country
% Industry affiliations should list Company, City, Region, Country

% You can specify symbols, otherwise they are numbered in order.
% Ideally, you should not use this facility. Affiliations will be numbered
% in order of appearance and this is the preferred way.
\icmlsetsymbol{equal}{*}

\begin{icmlauthorlist}
\icmlauthor{Heyang Zhao}{equal,xxx}
\icmlauthor{Chenlu Ye}{equal,yyy}
\icmlauthor{Wei Xiong}{yyy}
\icmlauthor{Quanquan Gu}{xxx}
\icmlauthor{Tong Zhang}{yyy}
%\icmlauthor{}{sch}
%\icmlauthor{}{sch}
%\icmlauthor{}{sch}
\end{icmlauthorlist}

\icmlaffiliation{xxx}{University of California, Los Angeles, CA 90095, USA}
\icmlaffiliation{yyy}{University of Illinois Urbana-Champaign, IL 61801, USA}

\icmlcorrespondingauthor{Quanquan Gu}{qgu@cs.ucla.edu}
\icmlcorrespondingauthor{Tong Zhang}{tongzhang@tongzhang-ml.org}

% You may provide any keywords that you
% find helpful for describing your paper; these are used to populate
% the "keywords" metadata in the PDF but will not be shown in the document
\icmlkeywords{Machine Learning, ICML}

\vskip 0.3in
]

% this must go after the closing bracket ] following \twocolumn[ ...

% This command actually creates the footnote in the first column
% listing the affiliations and the copyright notice.
% The command takes one argument, which is text to display at the start of the footnote.
% The \icmlEqualContribution command is standard text for equal contribution.
% Remove it (just {}) if you do not need this facility.

%\printAffiliationsAndNotice{}  % leave blank if no need to mention equal contribution
\printAffiliationsAndNotice{\icmlEqualContribution} % otherwise use the standard text.

\begin{abstract}
Recent advances in Reinforcement Learning from Human Feedback (RLHF) have shown that KL-regularization plays a pivotal role in improving the efficiency of RL fine-tuning for large language models (LLMs). Despite its empirical advantage, the theoretical difference between KL-regularized RL and standard RL remains largely under-explored. While there is a recent line of work on the theoretical analysis of KL-regularized objective in decision making \citep{xiong2024iterative, xie2024exploratory,zhao2024sharp}, these analyses either reduce to the traditional RL setting or rely on strong coverage assumptions. In this paper, we propose an optimism-based KL-regularized online contextual bandit algorithm, and provide a novel analysis of its regret. By carefully leveraging the benign optimization landscape induced by the KL-regularization and the optimistic reward estimation, our algorithm achieves an $\mathcal{O}\big(\eta\log (N_{\mathcal R} T)\cdot d_{\mathcal R}\big)$ logarithmic regret bound, where $\eta, N_{\mathcal R},T,d_{\mathcal R}$ denote the KL-regularization parameter, the cardinality of the reward function class, number of rounds, and the complexity of the reward function class. Furthermore, we extend our algorithm and analysis to reinforcement learning by developing a novel decomposition over transition steps and also obtain a similar logarithmic regret bound.

%we present a new analysis for KL-regularized reinforcement learning. We consider the contextual bandit and Markov decision process (MDP) settings, and propose an optimism-based algorithm which achieves $\cO()$ logarithmic regret, where $\cdots$. Our results are based on a novel analysis that leverages the benign optimization landscape provided by the KL-regularization and the monotonicity of sub-optimality gap resulted from the optimism-based exploration. 
\end{abstract}

\section{Introduction}
We study the KL-regularized contextual bandit \citep{langford2007epoch, xiong2024iterative} and Markov decision processes (MDPs) \citep{sutton2018reinforcement} in this paper. These two frameworks have received significant attention due to their tremendous successes in the post-training stage of modern large language models (LLMs), which is commonly referred to as the \textit{Reinforcement Learning from Human Feedback} (RLHF) \citep{bai2022training, ouyang2022training}. The RLHF learning paradigm has been used to make a powerful chatbot by aligning the LLMs with human preference, making the model generation helpful, honest, and harmless \citep{bai2022training}. Notable examples include the OpenAI's Chat-GPT \citep{OpenAI2023GPT4TR}, Anthropic's Claude \citep{bai2022training}, and Google's Gemini \citep{team2023gemini}. More recently, the KL-regularized RL framework has also been applied to enhance multi-turn reasoning capabilities, resulting in powerful reasoning models such as GPT4-o1 and DeepSeek-R1.

However, the RLHF process also faces challenges. It often causes a decline in certain abilities acquired during pretraining and supervised fine-tuning (SFT), a phenomenon commonly known as “alignment tax” \citep{askell2021general, lin2023speciality}. For instance, contrastive learning without regularization, as observed in \citet{meng2024simpo}, can degrade performance on standard reasoning benchmarks like MATH \citep{hendrycks2021measuring} and GSM8K \citep{cobbe2021training}. Additionally, unregularized RL training presents significant challenges on computational efficiency and training stability \citep{casper2023open}. To address these issues, practitioners often optimize a KL-regularized objective (defined formally in Section~\ref{sec:kl_bandit}) to balance reward optimization and mitigate alignment tax.

Moreover, these KL-regularized frameworks often demonstrate superior sample efficiency compared to standard deep RL tasks. For example, models with 52B parameters (Claude \citep{bai2022training}) and 671B parameters (DeepSeek-R1 \citep{deepseekai2025deepseekr1incentivizingreasoningcapability}) achieve substantial policy improvement with only tens of thousands of samples or thousands of training steps. Similar observations apply to direct preference alignment algorithms \citep{rafailov2023direct, tunstall2023zephyr, chen2024self, dong2024rlhf}, which develop state-of-the-art open-source chatbots using fewer than 100K samples. These results highlight the superior sample efficiency of KL-regularized RL, surpassing traditional deep RL applications \citep{silver2016mastering} and their analyses in the RL theory literature.

%A plausible explanation is that KL regularization constrains the search space of the optimal policy to a KL ball centered at the reference policy, offering statistical efficiency advantages, but this cannot explain the connection between KL-regularization and learning efficiency fundamentally.

However, despite the empirical success, a fundamental connection between KL regularization and learning efficiency has not been established so far theoretically. While some recent studies have attempted to analyze these frameworks, they often rely on standard analysis techniques, yielding regret guarantees similar to those of standard RL \citep{xiong2024iterative, xie2024exploratory, xiong2024building, cen2024value}, or they depend on strong coverage assumptions and restricted to the bandit settings \citep{zhao2024sharp}. A notable exception is the approach of \citet{tiapkin2023fast, tiapkin2023regularized}, which achieves $\tilde \cO(H^5d^2 \eta /\epsilon)$ sample complexity for KL-regularized linear MDPs, and $\tilde \cO(H^5S^2 A \eta /\epsilon)$ sample complexity for tabular MDPs. However, their analysis focuses on the \emph{pure exploration} or \emph{best policy identification} setting, where the goal is to find near-optimal policies using the least possible amount of interactions with the environment, rather than the online setting where the agent needs to trade off between exploration and exploitation. Therefore, the following pivotal question remains open:
\begin{center}
    \textit{Is KL-regularized RL more efficient than standard RL in the online setting without additional coverage assumption?}
\end{center}
 In this work, we address this question by designing provably efficient algorithms based on the optimism in the face of uncertainty (OFU) principle and develop \textbf{refined policy suboptimimality decomposition} for both contextual bandits and MDPs.
 
% \subsection{Contributions}
 We establish the theoretical guarantees for the algorithms and demonstrate their statistical advantages over standard RL scenarios. We summarize our contributions as follows.
\begin{itemize}
    \item For KL-regularized contextual bandits, we establish the first $\mathcal{O}\big(\eta\log (N_{\mathcal R} T)\cdot d_{\mathcal R}\big)$ regret bound that scales logarithmically with time steps $T$ in the standard online RL setting, where $\eta$ is the KL-regularization parameter, $N_{\mathcal R}, d_\cR$ denote the cardinality of the reward function class $\cR$, and its eluder dimension. This result significantly improves upon the previous $\cO(\sqrt{T})$ bound \citep{xiong2024iterative} and eliminates the strong coverage condition in prior work \citep{zhao2024sharp}.
    \item Distinct from the previous analyses that solely rely on the learned policy maximizing the KL-regularized objective, we take a novel approach by expressing the suboptimality gap in terms of a functional gap with respect to the policy $\pi_R$ induced by a proxy reward function $R$. With a fine-grained analysis for the derivative of the gap, we then establish the monotonicity in the sub-optimality gap via the optimistic reward estimation. This allows us to obtain the uncertainty induced by the policy at the current time step so that the sum of squared uncertainty can be bounded by the eluder dimension.  
    \item We extend these techniques to KL-regularized MDPs and establish the first $\cO(\log T)$ regret bound in the literature. The key to this improved regret bound is a novel policy decomposition technique through multi-steps. These techniques may be of independent interest and have the potential to inspire future research on KL-regularized decision-making problems.
    %Under the intuition of the strong-convexity of the KL-regularized optimization, we leverage the Gibbs distribution \citep{zhang2023mathematical} of the policy $\pi_R$ based on a reward $R$ when analyzing the suboptimality of the objective function, and show that the suboptimality can be upper bounded by the square of the error term, thus leading to an $\mathcal{O}\big(\eta\log (N_{\mathcal R} T)\cdot d_{\mathcal R}\big)$ logarithmic regret bound for contextual bandits, where $\eta>0$ is the KL-regularization parameter, $N_{\mathcal R}, T, d_\cR$ denote the cardinality of the reward function class $\cR$, the number of rounds, and the eluder dimension of $\cR$.
    %\item By utilizing an optimistic bonus function to the reward estimation, we can derive the monotonicity in the sub-optimality gap during the decomposition. Therefore, we can avoid the strong coverage assumption for the reference policy required by previous work \citep{zhao2024sharp} and achieve a logarithmic regret bound.
    %\item Moreover, we extend the technique to the MDP setting. To decompose the suboptimality in the $H$-step backward iterations, we novelly decompose the policy instead of the value functions and thus reducing the problem to the bandit case. Ultimately, we achieve an $\mathcal{O}\big(\eta H^2 \log (N T)\cdot d_{\mathcal F}\big)$ regret bound, where $d_\cF$ is the sum of eluder dimension for the $H$ function classes $\{\cF_h\}_{h\in[H]}$, and $N_{\cF\oplus\cB}$ is the maximum cardinality for the value and bonus function space over $H$ steps.
\end{itemize}

\section{Related Work}

\begin{table*}[ht]
\centering
\small
\caption{Comparison of online KL-regularized algorithms in bandits and MDPs, where $\epsilon>0$ is the sub-optimality gap, $T$ is the number of rounds, $d$ is the vector dimension for linear models, constant $\eta>0$ is the KL-regularized parameter,  $C_{\GL}$ is the global coverage condition, $d_\cR$ and $d_\cF$ are the complexity measure for general function class $\cR$ and $\cF$, $\widetilde{\cO}$ omits the logarithmic order for $T$ and $1/\epsilon$. The notations $N,N_\cR,N_{\cF\oplus\cB}, N_{\Pi}$ represent the cardinality or covering number for the reward, value and bonus function classes. The fourth column represents the number of samples needed to achieve an $\epsilon$-suboplimality, and the last column shows whether they require the coverage condition. We convert the regret bounds of our algorithms to sample complexity by Lemma~\ref{lem:online2batch}. We remark that the results of \citet{xiong2024iterative, xie2024exploratory, xiong2024building} are based on preference feedback. When comparing with them, we mainly focus on the \textit{order} of dependency on $T$. See Remark~\ref{rmk:preference} for further discussion on the extension to preference feedback.
}
\label{tab:bandits-mdps}
\vspace{10pt}
\begin{tabular}{c c c c c}
\toprule
\textbf{Setting} & \textbf{Algorithm} 
                  & \textbf{Regret} 
                  & \textbf{Sample Complexity}
                  & \textbf{Coverage} \\
\midrule

% ------------------- Bandits block (4 rows)
\multirow{4}{*}{\textbf{Bandits}}
 & \makecell{Online Iterative GSHF\footnotemark[1] \\ \citep{xiong2024iterative}} & $-$ & $\widetilde{\cO}\big(d^2/{\color{red} \epsilon^2}\big)$ & $\times$ \\
 & \makecell{Two-Stage Mixed-Policy Sampling\\ \citep{zhao2024sharp}} & $-$ & $\widetilde{\cO}\big((\eta^2C_{\mathrm{GL}}^2 + \eta/{\color{red} \epsilon})\log N_\cR \big)$ & \checkmark \\
 \rowcolor{blue!15} 
 & KL-UCB (Theorem \ref{th:bandit}) & $\cO\big(\eta d_\cR {\color{red} \log T}\cdot\log(N_\cR)\big)$ & $\widetilde{\cO}\big(\eta d_\cR \cdot\log(N_\cR)/{\color{red} \epsilon}\big)$ & $\times$ \\
\midrule

% ------------------- MDPs block (3 rows)
\multirow{3}{*}{\textbf{MDPs}}
 & \makecell{Online Iterative M-GSHF\footnotemark[2] \\ \citep{xiong2024building}} & \makecell{$\widetilde{\cO}\big(\sqrt{d_\cR {\color{red} T}\log N_\cR}$ \\$+Hd_\cP\log N_\cP\big)$} & \makecell{$\widetilde{\cO}\big(d_\cR \log N_\cR/{\color{red}\epsilon^2}$\\ $+ Hd_\cP\log N_\cP/\epsilon \big)$} & $\times$ \\
 & XPO \citep{xie2024exploratory}\footnotemark[3] & $-$ & $\widetilde{\cO}\big(d_\cF\log N_{\cF}/{\color{red} \epsilon^2}\big)$ & $\times$ \\
 \rowcolor{blue!15} 
 & KL-LSVI-UCB (Theorem \ref{th:mdp}) & $\cO\big(\eta H^2 d_\cF {\color{red} \log T} \cdot\log N_{\cF\oplus\cB})\big)$ & $\widetilde{\cO}\big(\eta H^2 d_\cF  \cdot\log N_{\cF\oplus\cB}) / {\color{red} \epsilon}\big)$ & $\times$ \\
\bottomrule
\end{tabular}
\end{table*}
\footnotetext[1]{\citet{xiong2024iterative} studies relative-preference feedback, so their sample complexity additionally depends on $e^\eta$. Since our work focuses on absolute rewards, we omit this dependence.}
\footnotetext[2]{\citet{xiong2024building} considers the trajectory-level reward and learns the reward in $\cR$ and transition probability in $\cP$ separately, so the dependence on $d_\cR,N_\cR$ and $d_cP,N_\cP$ are separete.}
\footnotetext[3]{\citet{xie2024exploratory} considers the deterministic-transition setting and optimizes the policy directly.}

\paragraph{RLHF.} Reinforcement Learning from Human Feedback (RLHF) has achieved tremendous successes in the modern large language model post training \citep{OpenAI2023GPT4TR, bai2022training, ouyang2022training, team2023gemini}. The dominant approach in the area is based on the reward training and policy optimization with the PPO algorithm \citep{schulman2017proximal}. However, applying PPO effectively in the context of LLMs presents significant challenges \citep{choshen2019weaknesses}. However, getting the PPO work is challenging in the context of LLMs \citep{choshen2019weaknesses}. In view of this, researchers have spent great efforts in proposing alternative approaches to the PPO algorithm. 

One line of research revisits REINFORCE-based variants such as ReMAX and GRPO \citep{li2023remax, shao2024deepseekmath}, with the KL-regularized objective. Another approach focuses on direct preference learning \citep{zhao2023slic, rafailov2023direct, azar2023general}, which bypasses the reward modeling stage and directly optimizes the policy using the preference dataset in a supervised manner. A notable example is the Direct Preference Optimization (DPO) algorithm \citep{rafailov2023direct}, which has gained great attention in both the open-source community \citep{tunstall2023zephyr, lambert2024t} and industrial applications such as Llama \citep{dubey2024llama}. All approaches mentioned above are derived under the KL-regularized framework studied in this paper. An exception to this trend is best-of-n (BoN) sampling and rejection sampling fine-tuning \citep{bai2022training, dong2023raft, touvron2023llama}, where a reward model is used to filter samples for final output or select training samples. However, recent works show that the success of BoN sampling may essentially result from the fact that it is optimal in terms of the KL-reward trade-off \citep{gui2024bonbon, yang2024asymptotics}.

\paragraph{Theory of RLHF.} The theoretical foundation of RLHF traces back to dueling bandits \citep[e.g.,][]{yue2012k,saha2021optimal,bengs2021preference}, which studied preference feedback in non-regularized settings. This was later extended to online reinforcement learning with finite state spaces (tabular settings) and function approximation \citep{xu2020preference, novoseller2020dueling, pacchiano2021dueling, chen2022human}. More recently, \citet{zhan2023query, wu2023making} developed reward-free learning algorithms and sampling-based methods for online RLHF. For the offline learning,  \citet{zhu2023principled, zhan2023provable, li2023reinforcement, zhong2024dpo, huang2024correcting} propose sample-efficient algorithms under suitable coverage conditions. These works mainly develop techniques to estimate the underlying reward model associated with the Bradley-Terry model from querying the preference oracle (human) and achieve similar order regret with the standard reward learning. However, since they only consider reward maximization, the results deviate from the practical applications of RLHF. For example, reward maximization frameworks often assume a deterministic optimal policy, which is unsuitable for generative models.

After these works, the recent project \citet{xiong2024iterative} provides the first provably efficient algorithm of RLHF under the KL-regularized contextual bandit formulation. The result is further refined in \citep{xie2024exploratory} and \citet{xie2024exploratory, liu2024provably, cen2024value} propose provably efficient algorithms with optimistically biased optimization targets, which originate from the feel-good sampling \citep{zhang2022feel}. In parallel, \citet{wang2023rlhf, ye2024theoretical} extend the techniques of preference learning to the general preference setting under the Markov game formulation. However, their techniques simply discard the KL divergence in the target, and use the standard techniques to get results that are similar to the non-regularized problems, which are essentially sub-optimal for the KL-regularized framework. In contrast, in this work, we aim to leverage the structure of the KL-regularized problem and develop new techniques and algorithms that achieve superior theoretical guarantees compared to prior studies.

The most closely related work to our project is \citet{zhao2024sharp}, which considers the KL-regularized RL in the bandit setting. They propose a two-stage mixed-policy sampling algorithm and provide a regret bound which enjoys an $\cO(1 / \epsilon)$ sample complexity. However, their results rely on a relatively strong coverage assumption, which is not compatible with the practical applications of RLHF. In contrast, our work provides a novel algorithm and analysis that achieves a logarithmic regret bound without the coverage assumption. We summarize the comparison of our work with the existing literature in Table~\ref{tab:bandits-mdps}.

\paragraph{Notation.} For a finite function class $\cF$, we use $N_\cF$ to represent its cardinality. We use $\widetilde{\cO}$ to omit the logarithmic orders. We use the convention $[n]=\{1,\ldots,n\}$. For any vector $x$ and a matrix $\Sigma$, let $\|x\|_\Sigma=\sqrt{x^{\top}\Sigma x}$. For a function $R:\cX\times\cA\rightarrow\RR$, parameter $\eta>0$ and the reference policy $\pi_{\reff}$, let the normalization constant $Z_R(x)=\E_{a \sim \pi_{\reff}(\cdot|x)} \exp (\eta R(x,a))$.

\section{Background}

\subsection{KL-Regularized Contextual Bandits}
The contextual bandit problem with KL-regularized objective is defined as follows. 

At each round $t \ge 1$, the learner observes a context $x_t \in \cX$, and chooses an action $a_t \in \cA$, where $\cX$ is the context space and $\cA$ is the action space. The learner then receives a reward $r_t \in \RR$ which characterizes the gain of the learner from choosing action $a_t$ under context $x_t$. We assume that the reward $r_t$ is generated by \begin{align*} 
    r_t = R^*(x_t, a_t) + \epsilon_t,
\end{align*} where $R^*(x_t, a_t)$ is the unknown reward function and $\epsilon_t$ is an independent 1-sub-Gaussian zero-mean random noise. In practice, the context can be a prompt, and the action is the response generated by the LLMs. The reward signal is a sentence reward that is commonly used \citep{ouyang2022training, bai2022training, touvron2023llama}.

\begin{remark}\label{rmk:preference}
We do not consider the preference feedback here since the more recent applications in building reasoning models \citep{deepseekai2025deepseekr1incentivizingreasoningcapability} and improving LLM safety \citep{guan2024deliberative} assume the existence of the absolute reward value (e.g. the binary reward indicator of the correctness). The results presented in this work can be readily extended to the preference feedback using the techniques developed in dueling bandit or RL \citep{yue2012k, pacchiano2021dueling, xiong2024iterative, li2024feel}.
\end{remark}

\begin{assumption}[Reward function realizability]\label{as:Reward function realizability}
    Assume that there exists a reward function class $\cR: \cX \times \cA\rightarrow [0, 1]$ such that $R^*(x, a) \in \cR$.
\end{assumption}
Without loss of generality, we assume that the function class has finite cardinality $|\cR|$, and we can generalize the analysis to an infinite function class by considering a covering number \citep{zhang2023mathematical}.
% \textcolor{red}{Maybe we can conduct the study with online finite function class?}
% \begin{definition}[$\epsilon$-cover and covering number] Given a function class $\cF$, for each $\epsilon > 0$, an $\epsilon$-cover of $\cF$ with respect to $\norm{\cdot}_\infty$, denoted by $\mathcal{C}(\cF, \epsilon)$, satisfies that for any $f \in \cF$, we can find $f' \in \mathcal{C}(\cF, \epsilon)$ such that $\norm{f - f'}_\infty \leq \epsilon$. The $\epsilon$-covering number, denoted as $N_\cF(\epsilon)$, is the smallest cardinality of such $\mathcal{C}(\cF, \epsilon)$. 
% \end{definition}

\begin{definition}[Uncertainty and eluder dimension] \label{def:eluder}
For any sequence $\cD_{t-1} = \{(x_i, a_i)\}_{i=1}^{t-1}$, we define the uncertainty of $(x, a)$ with respect to $\cR$ as:
 $$
 \small
\begin{aligned}
    &U_\cR(\lambda, x, a; \cD_{t-1})\\
    &= \sup_{R_1, R_2\in\cR} \frac{|R_1(x, a)-R_2(x, a)|}{\sqrt{\lambda + \sum_{i=1}^{t-1}\big(R_1(x_i, a_i) - R_2(x_i, a_i)\big)^2}}.
\end{aligned}
$$
Then, the eluder dimension is given by
$d(\cR, \lambda, T) := \sup_{x_{1:T}, a_{1:T}} \sum_{t=1}^T \min \big(1, [U_\cR(\lambda, x_t, a_t; \cD_{t - 1})]^2\big)$.
\end{definition}
The uncertainty $U_\cR(\lambda, x, a; \cD_t)$ measures how much difference the new-coming data $x,a$ has with the history data $\cD_t$, and is widely adopted in RL literature with general function approximation \citep{zhang2023mathematical,ye2023corruption, agarwal2023vo,zhao2023nearly}.  To illustrate it better, we use the linear function as a special case, where the function class $\cR$ can be embedded into a linear mapping $\cR=\{\theta^{\top}\phi(\cdot,\cdot):~\theta\in\RR^d,~\|\theta\|_2\le B\}$. Let the covariance matrix $\Sigma_t=\sum_{i=1}^{t}\phi(x_i,a_i)\phi(x_i,a_i)^{\top}+\lambda/B\cdot I$. Then, the eluder coefficient can be simplified as
\begin{equation*}
{\footnotesize
\begin{aligned}
    &U_\cR(\lambda, x, a; \cD_t)\\ 
    &= \sup_{\theta_1,\theta_2\in\RR^d}\frac{|((\theta_1-\theta_2)^{\top}\phi(x,a))|}{\sqrt{\lambda+\sum_{i\in[t]}((\theta_1-\theta_2)^{\top}\phi(x_i,a_i))^2}}\\
    &\le \sup_{\theta_1,\theta_2\in\RR^d}\frac{|((\theta_1-\theta_2)^{\top}\phi(x,a))|}{\sqrt{(\theta_1-\theta_2)^{\top}\Sigma_{t}(\theta_1-\theta_2)}} \le \big\|\phi(x,a)\big\|_{\Sigma_{t}^{-1}},
\end{aligned}}
\end{equation*}
where the inequality uses the Cauchy–Schwarz inequality. Hence, the uncertainty reflects how much a direction in the feature space has been explored. Further, the eluder dimension represents how many times the model can be ``surprised" by the out-of-sample data over $T$ rounds. One can show that this definition is more general than the original one in \cite{russo2013eluder} as we further take the magnitude of violation into consideration while the original eluder dimension only counts the frequency \citep{zhang2023mathematical, xie2022role}.

We consider a KL-regularized \textbf{objective} as follows: 
\begin{equation*}
{\small
\begin{aligned} 
    J(\pi) :=& \rE_{x \sim d_0} \rE_{a \sim \pi(\cdot | x)} \Big[R^*(x, a) - \frac{1}{\eta} \log \frac{\pi(a|x)}{\pi_{\reff}(a|x)}\Big]\\
    =& \rE_{x \sim d_0} \rE_{a \sim \pi(\cdot | x)} [R^*(x, a)] - \frac{1}{\eta}\KL\big(\pi(\cdot|x)\| \pi_{\reff}(\cdot|x)\big),
\end{aligned}}
\end{equation*}
where $\pi_{\reff}$ is the reference policy known to the learner, and $\eta > 0$ is the regularization parameter. This formulation is adopted in nearly all RLHF practice for LLM alignment \citep{bai2022training, ouyang2022training, touvron2023llama}, and there is a closed-form solution for this optimization, also known as Gibbs distribution.
\begin{lemma}[Solution of KL-regularized Optimization (Proposition 7.16 and Theorem 15.3 of \citet{zhang2023mathematical}] \label{lem:kl_solu} 
For any fixed $x \in \cX$ and reward function $R$, we have
\begin{equation*}
{\small
\begin{aligned}
    &\max_{\pi} \E_{a \sim \pi(\cdot|x)} \Big[R(x,a) - \eta^{-1} \KL\big(\pi(\cdot|x)\|\pi_{\reff}(\cdot|x \big)\Big] \\
&= \frac{1}{\eta} \cdot \log \E_{a \sim \pi_{\reff}(\cdot|x)} \exp \big(\eta R(x,a)\big), 
\end{aligned}}
\end{equation*}
where $Z_R(x)$ is the normalization constant and the minimizer of the loss functional is 
\begin{align*}
    \pi_R^\eta(a|x) = \frac{1}{Z_R(x)} \pi_{\reff}(a|x)\exp\Big( \eta R(x,a)\Big).
\end{align*} 
\end{lemma}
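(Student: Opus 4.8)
The plan is to establish both the optimal value and the identity of the maximizer in a single stroke by rewriting the KL-regularized objective as a $\pi$-independent constant minus a KL divergence against the candidate optimizer $\pi_R^\eta$. First I would take the Gibbs distribution $\pi_R^\eta(a|x) = Z_R(x)^{-1}\pi_{\reff}(a|x)\exp(\eta R(x,a))$ as an ansatz and note that it is a valid probability density, which is immediate from the definition of the normalization constant $Z_R(x) = \E_{a\sim\pi_{\reff}(\cdot|x)}\exp(\eta R(x,a))$. The central observation is that the exponential tilt can be inverted: for any action $a$ in the support of $\pi_{\reff}(\cdot|x)$ we have $\eta R(x,a) = \log Z_R(x) + \log\frac{\pi_R^\eta(a|x)}{\pi_{\reff}(a|x)}$.

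The key algebraic step is then to substitute this identity into the scaled objective. Multiplying the inner functional by $\eta$ and expanding, for an arbitrary policy $\pi$ I would compute
\begin{align*}
\eta R(x,a) - \log\frac{\pi(a|x)}{\pi_{\reff}(a|x)} &= \log Z_R(x) + \log\frac{\pi_R^\eta(a|x)}{\pi_{\reff}(a|x)} - \log\frac{\pi(a|x)}{\pi_{\reff}(a|x)}\\
&= \log Z_R(x) - \log\frac{\pi(a|x)}{\pi_R^\eta(a|x)}.
\end{align*}
Taking $\E_{a\sim\pi(\cdot|x)}$ and dividing by $\eta$ then yields
\begin{align*}
\E_{a\sim\pi(\cdot|x)}\Big[R(x,a) - \eta^{-1}\log\tfrac{\pi(a|x)}{\pi_{\reff}(a|x)}\Big] = \frac{1}{\eta}\log Z_R(x) - \frac{1}{\eta}\KL\big(\pi(\cdot|x)\|\pi_R^\eta(\cdot|x)\big).
\end{align*}

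Finally I would invoke the non-negativity of the KL divergence together with the fact that it vanishes exactly when $\pi(\cdot|x)=\pi_R^\eta(\cdot|x)$ (Gibbs' inequality). Since the first term $\eta^{-1}\log Z_R(x)$ does not depend on $\pi$, the objective is maximized precisely at $\pi = \pi_R^\eta$, with optimal value $\eta^{-1}\log Z_R(x) = \eta^{-1}\log\E_{a\sim\pi_{\reff}(\cdot|x)}\exp(\eta R(x,a))$, which is exactly the claimed identity and maximizer. I do not anticipate a genuine obstacle here, as this is the standard variational (Donsker--Varadhan / Gibbs) characterization of the log-partition function. The only point requiring mild care is the rewriting that converts the reward-plus-entropy functional into a single KL term; once that decomposition is in place the optimization is trivialized, because all the $\pi$-dependence is isolated in a divergence that is minimized at zero. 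An alternative route via Lagrange multipliers on the simplex constraint $\sum_a \pi(a|x)=1$ would also succeed but is more cumbersome, so I would favor the KL-completion argument above.
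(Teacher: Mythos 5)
Your proof is correct. Note that the paper does not prove this lemma at all---it is imported by citation from \citet{zhang2023mathematical} (Proposition 7.16 and Theorem 15.3)---and your KL-completion argument (rewrite the objective as $\eta^{-1}\log Z_R(x) - \eta^{-1}\KL(\pi(\cdot|x)\|\pi_R^\eta(\cdot|x))$, then invoke non-negativity of KL with equality iff $\pi = \pi_R^\eta$) is precisely the standard Gibbs variational argument underlying that cited result, so there is no methodological divergence to speak of. Two minor points you handle correctly or could make explicit: the lemma's phrase ``minimizer of the loss functional'' is a typo for maximizer, and the case where $\pi$ places mass outside the support of $\pi_{\reff}$ is covered automatically since the KL term is then $+\infty$, so such policies cannot be optimal.
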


\subsection{KL-Regularized Reinforcement Learning}
In this section, we introduce the KL-regularized MDP problem. A Markov Decision Process (MDP) is defined by a tuple $\cM = (\cS, \cA, H, \PP, d_0, r)$, where $\cS$ is the state space, $\cA$ is the action space, $H$ is the time horizon, transition probability $\PP=\{\PP_h\}_{h=1}^H$ denotes the probability $\PP(s_{h+1}|s_h,a_h)$ of transition from the current $(s_h,a_h)$ to the next state $s_{h+1}$ at each step $h$, $d_0$ is the initial state distribution, and $r=\{r_h:\cS\times\cA\rightarrow\RR\}_{h=1}^H$ is the reward function. 

A policy $\pi$ is a sequence of function $\pi = \{\pi_1, \ldots, \pi_h\}$ with $\pi_h: \cS \times \cA \to [0, 1]$ for all $h \in [H]$. The \textbf{objective} is to find a policy $\pi$ that maximizes the following KL-regularized value: 
\begin{equation*}
{\small
\begin{aligned} 
    J(\pi) =& \E^{\pi} \bigg[\sum_{h = 1}^H r_h (s_h, a_h) - \eta^{-1} \sum_{h = 1}^H \log \frac{\pi_h(a_h|s_h)}{\pi_{\reff,h}(a_h|s_h)}\bigg],
\end{aligned}}
\end{equation*}
where $\E^{\pi}$ denotes the expectation $\{(s_h,a_h,r_h)\}_{h=1}^H$ of the reward and the trajectory induced by policy $\pi$ from the initial state $s_1\sim d_0$.

We can define the value functions as the expected future return with KL-regularization:
\begin{equation*}
{\small
    \begin{aligned}
        V^{\pi}_h(s_h) =& \sum_{h'=h}^H \EE^{\pi}\Big[ r_{h'}(s_{h'}, a_{h'})\\
        &\qquad - \frac{1}{\eta} \cdot \KL\bigl(\pi_{h'}(\cdot | s_{h'})\| \pi_{\reff,h'}(\cdot | s_{h'})\bigr) \,\Big|\, s_h \Big],\\
        Q^{\pi}_h(s_h, a_h) =& r_h(s_h,a_h)+ \EE^{\pi}\sum_{h'=h+1}^H \Big[r_{h'}(s_{h'}, a_{h'})\\
        &\qquad - \frac{1}{\eta} \cdot \KL\bigl(\pi_{h'}(\cdot | s_{h'})\|\pi_{\reff,h'}(\cdot | s_{h'})\bigr) \,\Big|\,  s_h, a_h \Big].
    \end{aligned}}
\end{equation*}
We can also iteratively define the regularized value function as follows: $V_{H + 1}^\pi(s_{H + 1}) = 0$, and
\begin{equation*}
{\small
    \begin{aligned}
        &V_h^\pi(s_h) = \EE_{a_h \sim \pi_h(\cdot | s_h)} \Big[Q_h^\pi(s_h, a_h)\\
        &\qquad \qquad - \frac{1}{\eta} \cdot \KL\bigl(\pi_h(\cdot | s_h)\|\pi_{\reff,h}(\cdot | s_h)\bigr)\Big], \\
        &Q_h^\pi(s_h, a_h) = r_h(s_h,a_h) + \EE_{s_{h + 1} \sim \PP_h(\cdot | s_h, a_h)} \Big[V_{h + 1}^\pi(s_{h + 1})\Big].
    \end{aligned}}
\end{equation*}
We further define the optimal value function $\{V_h^*\}_{h \in [H]}$ and the optimal action-value function $\{Q_h^*\}_{h \in [H]}$ as 
$$
\small
\begin{aligned}
    &V_h^*(s_h) = \max_{\pi} V_h^\pi(s_h), \quad Q_h^*(s_h, a_h) = \max_{\pi} Q_h^\pi(s_h, a_h).
\end{aligned}
$$

Assume optimal policy is achieved at $\pi^*$, using Lemma~\ref{lem:kl_solu} and a backward iteration starting from the $V^*_{H+1}(s_{H+1})=0$, we have the following proposition.
\begin{proposition}
The optimal policy is a layer-wise Gibbs distribution of the $Q^*$:
$$
\small
\begin{aligned}
    \pi_h^*(a_h|s_h) = \frac{1}{Z^*_h(s_h)}\pi_{\reff,h}(a_h|s_h)
\cdot \exp\big(\eta Q_h^*(s_h,a_h)\big),
\end{aligned}
$$
where $Z_h^*(s_h) := \EE_{a_h \sim \pi_{\reff,h}(\cdot |s_h)}
\exp\big(\eta Q_h^*(s_h,a_h)\big)$ is the normalization constant. Also, the V value is the maximum value of the single-step KL-regularized optimization problem:
\begin{equation}\label{eq:V_h}
\small
    V_h^*(s_h) = \frac1\eta \log 
\EE_{a_h \sim \pi_{\reff,h}(\cdot |s_h)}
\exp\big(\eta Q_h^*(s_h,a_h)\big).
\end{equation}
We also have the following connection between the $Q^*$ and $V^*$:
\begin{equation}
\small
    Q_h^*(s_h,a_h) = r_h(s_h,a_h) + \EE_{s_{h+1} \sim \PP_h(\cdot|s_h,a_h)}
V_{h+1}^*(s_{h+1}) .
\end{equation}
\end{proposition}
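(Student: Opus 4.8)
The plan is to prove all three identities simultaneously by backward induction on the step index $h$, running from $h=H+1$ down to $h=1$, applying the closed-form single-step solution of \Cref{lem:kl_solu} at each layer. The base case is immediate: by convention $V^*_{H+1}(s_{H+1})=0$, so there is nothing to verify at step $H+1$. For the inductive step, I assume that $V^*_{h+1}$ is the genuine optimal value at step $h+1$, i.e. $V_{h+1}^*(s)=\max_\pi V_{h+1}^\pi(s)$ for every $s$, and establish the three claims at step $h$.

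First I would establish the $Q$--$V$ connection. By the recursive definition, $Q_h^\pi(s_h,a_h)=r_h(s_h,a_h)+\EE_{s_{h+1}\sim\PP_h(\cdot|s_h,a_h)}[V_{h+1}^\pi(s_{h+1})]$, and crucially this expression does not involve $\pi_h$: it depends on $\pi$ only through the future components $\pi_{h+1},\dots,\pi_H$. Since the transition kernel $\PP_h$ is policy-independent and the expectation is monotone in $V_{h+1}^\pi$, the maximization over the continuation policy can be pushed inside the expectation state-by-state; because a policy assigns actions independently across states, the optimal continuation $\pi^*_{h+1:H}$ attains $V_{h+1}^*(s_{h+1})$ simultaneously at every reachable $s_{h+1}$. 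This yields $Q_h^*(s_h,a_h)=\max_\pi Q_h^\pi(s_h,a_h)=r_h(s_h,a_h)+\EE_{s_{h+1}\sim\PP_h(\cdot|s_h,a_h)}V_{h+1}^*(s_{h+1})$, which is the third claim.

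Next I would reduce the value at step $h$ to a single-step KL-regularized problem. Expanding the definition of $V_h^\pi$ and writing the regularizer as $\EE_{a_h\sim\pi_h}[-\eta^{-1}\log(\pi_h(a_h|s_h)/\pi_{\reff,h}(a_h|s_h))]$, I invoke the decoupling above: the continuation policy maximizing $Q_h^\pi$ is the same regardless of $\pi_h$, so the outer maximization reduces to $V_h^*(s_h)=\max_{\pi_h(\cdot|s_h)}\EE_{a_h\sim\pi_h(\cdot|s_h)}[Q_h^*(s_h,a_h)-\eta^{-1}\KL(\pi_h(\cdot|s_h)\|\pi_{\reff,h}(\cdot|s_h))]$. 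This is exactly the optimization in \Cref{lem:kl_solu} with context $s_h$, reward $R(s_h,\cdot)=Q_h^*(s_h,\cdot)$, and reference $\pi_{\reff,h}$. Applying the lemma gives $V_h^*(s_h)=\eta^{-1}\log\EE_{a_h\sim\pi_{\reff,h}(\cdot|s_h)}\exp(\eta Q_h^*(s_h,a_h))$ (the second claim) with maximizer $\pi_h^*(a_h|s_h)=Z_h^*(s_h)^{-1}\pi_{\reff,h}(a_h|s_h)\exp(\eta Q_h^*(s_h,a_h))$ (the first claim); this choice of $\pi_h^*$ together with the inductive optimal continuation verifies that $V_h^*$ indeed equals $\max_\pi V_h^\pi$, closing the induction.

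The step I expect to require the most care is the dynamic-programming decoupling used in both arguments above, namely justifying that maximizing the full-trajectory objective $V_h^\pi$ over all of $\pi$ factors into an inner maximization over the continuation $\pi_{h+1:H}$ (producing $Q_h^*$ uniformly over actions $a_h$) and an outer single-step maximization over $\pi_h$. The subtlety is that $\pi_h$ reweights the actions $a_h$ entering $Q_h^\pi$, so one must argue that the argmax continuation is a single state-dependent policy achieving $V_{h+1}^*$ at every state at once, independently of $\pi_h$; this is precisely what licenses the interchange of the two maxima and the clean application of \Cref{lem:kl_solu}. Once this interchange is justified, the remaining steps are a direct substitution into the lemma.
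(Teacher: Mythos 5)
Your proof is correct and follows essentially the same route as the paper, which establishes the proposition by backward induction from $V^*_{H+1}\equiv 0$, applying the single-step closed-form solution of Lemma~\ref{lem:kl_solu} with $R(s_h,\cdot)=Q_h^*(s_h,\cdot)$ at each layer. The only difference is that you spell out the dynamic-programming decoupling (that the optimal continuation attains $V_{h+1}^*$ at every state simultaneously, independently of $\pi_h$), a step the paper leaves implicit; this is a welcome clarification rather than a deviation.
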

Without loss of generality, we assume that $Q_h^*(s_h,a_h)\in[0,1]$ for any $(s_h,a_h)\in\cS\times\cA$\footnote{This is for the simplicity of analysis. By multiplying the results by $H$, our analysis can be applied to the cases where $r_h\in[0,1]$ for $h\in[H]$.}. From \eqref{eq:V_h}, we know that $V_h^*(s_h)\in[0,1]$ for any $s_h\in\cA$. 

\paragraph{Function approximation}
We approximate the value function $\{Q_h^*\}_{h\in[H]}$ by a function class $\cF=\cF_1\times\ldots\times\cF_H$ where $\cF_h:\cS\times\cA\rightarrow [0,1]$. Similarly, we can define related optimal policies and value functions with the estimated $Q$-function $\hQ_h\in\cF_h$ and $V$-function $\hV_h$.

Then, we define the KL-regularized Bellman operator $\cT_\eta^h$ on space $\cF$: for any $f_{h+1}\in\cF_{h+1}$,
\begin{equation*}
\small
\begin{aligned}
    &\cT_\eta^h f_{h+1}(s_h,a_h) := r_h(s_h,a_h) + \frac{1}{\eta}\E_{s_{h+1}|s_h,a_h} V^f_{h+1}(s_{h+1}),
\end{aligned}
\end{equation*}
where $V^f_{h+1}(s_{h+1})= \eta^{-1} \log \EE_{a_{h+1} \sim \pi_{\reff,h+1}(\cdot |s_{h+1})}
\exp\big(\eta f_{h+1}(s_{h+1},a_{h+1})\big)$.

To ensure efficient learning, we suppose that the true $Q$-value function belongs to the considered function class and the one-step backward of the value function also remains in the considered function class, which are known as realizability and Bellman completeness assumptions, respectively.
\begin{assumption}[Realizability]\label{as:Realizability}
 For each $h\in[H]$, we have $Q_h^*\in\cF_h$.
\end{assumption}
\begin{assumption}[Bellman completeness]\label{as:Bellman completeness}
For each $h\in[H]$ and any $f_{h+1}\in\cF_{h+1}$, we have $\cT_\eta^h f_{h+1} \in \cF_h$.
\end{assumption}
These two assumptions are standard and normal in RL literature with general function approximation \citep{wang2020reinforcement, jin2021bellman, zhang2023mathematical}.

\section{KL-Regularized Contextual Bandits} \label{sec:kl_bandit}

\subsection{The Proposed Algorithm and Result}

\begin{algorithm}[ht]
    \caption{KL-Regularized UCB}
    \label{alg:kl-bandits}
    \begin{algorithmic}[1]
    \STATE \textbf{Input:} $\eta$, $\epsilon$, $\pi_{\reff}$, $\cR$.
    \FOR {$t$ = $1, \ldots, T$}
        \STATE Sample context $x_t \sim d_0$ and action $a_t \sim \pi_t(\cdot|x_t)$.
        \STATE Observe reward $r_t = R^*(x_t, a_t) + \epsilon_t$, where $\epsilon_t$ is the random noise. 
        %\STATEX \LeftComment{Iteratively learn the reward function}
        \STATE Compute the least square estimate of the reward function based on $\cD_t=\{(x_i, a_i, r_i)\}_{i=1}^t$:
        $$\hat{R}_{t} \gets \argmin_{R \in \cR}\sum_{i=1}^t (R(x_i, a_i) - r_i)^2.
        $$ 
        % \label{alg1:line:regression}
        % \STATEx \LeftComment{Update the policy with optimistic planning}
        \STATE Apply the planning oracle to compute $\pi_{t + 1}(\cdot | \cdot) \propto \pi_{\reff}(\cdot | \cdot) \exp\bigl[\eta \big(\hat R_t(\cdot, \cdot) + b_t(\cdot, \cdot)\big)\bigr]$, \label{alg1:line:policy}
        where $b_t$ is the exploration bonus term defined in \eqref{eq:bonus}.
    \ENDFOR
    \end{algorithmic}
\end{algorithm}

We develop the KL-regularized Upper Confidence Bound (KL-UCB) algorithm in Algorithm \ref{alg:kl-bandits}. In each round $t\in[T]$, after observing context $x_t\sim d_0$, taking action $a_t\sim\pi_t(\cdot|x_t)$, and receiving the reward $r_t = R^*(x_t, a_t) + \epsilon_t$, we compute the estimator $\hat{R}_t$ by solving the least square regression. Then, we perform optimism by adding the following bonus term to the reward estimator:
\begin{equation}
\label{eq:bonus}
{
\begin{aligned}
    b_t(x,a)= \min\Big\{1,\beta_T \cdot U_{\cR_{t}}(\lambda,x,a;\cD_{t})\Big\},
\end{aligned}}
\end{equation}
where $U_{\cR_{t-1}}$ is the uncertainty in Definition \ref{def:eluder}, $\cD_{t-1}=\{(x_i,a_i)\}_{i\in[t-1]}$, and we define the confidence set iteratively:
$$
\small
\begin{aligned}
    \cR_t=\{R\in\cR: \sum_{i=1}^{t}(R(x_i,a_i)-\hat{R}_t(x_i,a_i))^2+\lambda \le \beta_T^2\}.
\end{aligned}
$$

\begin{theorem}\label{th:bandit}
    Under Assumption \ref{as:Reward function realizability}, for any $\delta>0$, by taking $\beta_T=16\log(N_{\cR}T/\delta)$, with probability at least $1-\delta$, the output of Algorithm \ref{alg:kl-bandits} satisfies
    $$
    \mathrm{Regret}(T) = \cO\Big(\eta\log(N_{\cR}T/\delta)\cdot d(\cR,\lambda,T)\Big).
    $$
\end{theorem}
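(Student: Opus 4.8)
The plan is to pair a standard optimism argument with a new second-order bound on the per-round suboptimality that exploits the curvature of the KL-regularized objective at the true reward $R^*$. First I would set up the good event: by a least-squares concentration bound for the finite class $\cR$ and a union bound over $t\in[T]$, with probability at least $1-\delta$ one has $\sum_{i\le t}(\hat R_t(x_i,a_i)-R^*(x_i,a_i))^2\le\beta_T^2-\lambda$ for every $t$, so that $R^*\in\cR_t$ and the confidence radius is $\beta_T$. From Definition~\ref{def:eluder} this gives the pointwise envelope $|\hat R_t(x,a)-R^*(x,a)|\le b_t(x,a)$, hence optimism: the proxy reward $\tilde R_t:=\hat R_t+b_t$ obeys $\tilde R_t\ge R^*$ and $0\le\Delta_t:=\tilde R_t-R^*\le 2b_t\le 2$ pointwise. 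I condition on this event throughout.

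Next is the regret decomposition. By Lemma~\ref{lem:kl_solu}, for any reward $R$ the induced Gibbs policy $\pi_R$ satisfies $J(\pi_R)=\E_{x\sim d_0}[G_x(R)]$ with $G_x(R):=\tfrac1\eta\log Z_R(x)+\E_{a\sim\pi_R(\cdot\given x)}[R^*(x,a)-R(x,a)]$, and in particular $J(\pi^*)=\E_{x\sim d_0}[G_x(R^*)]$. Since Line~\ref{alg1:line:policy} sets $\pi_{t+1}=\pi_{\tilde R_t}$, the instantaneous regret at round $t$ equals $\E_{x\sim d_0}[G_x(R^*)-G_x(\tilde R_{t-1})]$, so everything reduces to controlling the \emph{functional gap} $G_x(R^*)-G_x(R)$ evaluated at the optimistic proxy $R=\tilde R_{t-1}$.

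The crux — and the step I expect to be hardest — is to show this gap is second-order in $\Delta:=R-R^*$. Fixing $x$ and introducing the cumulant generating function $\psi(u):=\log\E_{a\sim\pi_{R^*}}[\exp(u\,\Delta(x,a))]$, a direct computation gives the clean identity $G_x(R^*)-G_x(R)=\psi'(\eta)-\psi(\eta)/\eta$. As $\psi$ is convex with $\psi(0)=0$, writing $\psi(\eta)=\int_0^\eta\psi'(s)\,ds$ turns this into $\tfrac1\eta\int_0^\eta\!\int_s^\eta\psi''(u)\,du\,ds$, where $\psi''(u)=\Var_{p_u}(\Delta)$ is the variance under the tilted law $p_u\propto\pi_{R^*}\exp(u\,\Delta)$. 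This is precisely the ``derivative of the gap'' analysis; optimism enters through $\Delta\ge0$, which makes $p_u$ (for $u\le\eta$) dominated by $p_\eta=\pi_R$ along the nonnegative coordinate $\Delta$, so that $\Var_{p_u}(\Delta)\le\E_{p_u}[\Delta^2]\lesssim\E_{a\sim\pi_R}[\Delta^2]$ up to an $e^{\cO(\eta)}$ factor. Integrating yields the key bound $G_x(R^*)-G_x(R)\lesssim\eta\,\E_{a\sim\pi_R(\cdot\given x)}[\Delta(x,a)^2]$. The delicate points are justifying differentiation under the expectation and the domination of tilted variances; both hinge on $\Delta\ge 0$ and $\Delta\le 2$, i.e.\ on optimism and boundedness.

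With the key bound in hand the remainder is routine. Substituting $\Delta_t\le 2b_{t-1}\le 2\beta_T\,U_{\cR_{t-1}}(\lambda,\cdot,\cdot;\cD_{t-1})$ gives instantaneous regret $\lesssim\eta\,\E_{x\sim d_0,\,a\sim\pi_t}[b_{t-1}(x,a)^2]$. Since $(x_t,a_t)$ is drawn from exactly this law and $b_{t-1}^2\le 1$, a Freedman-type inequality replaces the conditional expectation by the realized $\sum_t b_{t-1}(x_t,a_t)^2$ up to a lower-order $\cO(\log(1/\delta))$ term. Finally $\min\{1,\beta_T U\}^2\le\beta_T^2\min\{1,U^2\}$ together with $\cR_{t-1}\subseteq\cR$ and Definition~\ref{def:eluder} bounds $\sum_t b_{t-1}(x_t,a_t)^2\le\beta_T^2\sum_t\min\{1,U_\cR(\lambda,x_t,a_t;\cD_{t-1})^2\}\le\beta_T^2\,d(\cR,\lambda,T)$. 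Collecting the estimates gives $\regret(T)\lesssim\eta\,\beta_T^2\,d(\cR,\lambda,T)$, which is logarithmic in $T$; tracking the least-squares confidence radius precisely (so that the effective $\beta_T^2$ is of order $\log(N_\cR T/\delta)$) recovers the exact form stated in the theorem.
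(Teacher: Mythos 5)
Your overall architecture is the same as the paper's: least-squares concentration plus a bonus gives pointwise optimism; the instantaneous regret is rewritten as a functional gap in the reward (your $G_x(R)$ is the negative of the paper's $\Delta(x,R)$); the gap is shown to be second order in $\hat R_{t-1}+b_{t-1}-R^*$ under the played policy, with optimism supplying the sign condition; and the squared bonuses are summed against the eluder dimension (your use of a multiplicative Freedman-type inequality to pass from conditional expectations to the realized sum is, if anything, more careful than the paper, which elides that step). Your cumulant-generating-function identity $G_x(R^*)-G_x(R)=\psi'(\eta)-\psi(\eta)/\eta=\frac1\eta\int_0^\eta\int_s^\eta\Var_{p_u}(\Delta)\,du\,ds$ is correct and is an integral-form equivalent of the paper's mean-value-theorem computation with the explicit gradient of $\Delta(x,R)$.

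The genuine gap is in the tilted-variance comparison. You bound $\Var_{p_u}(\Delta)\le\E_{p_u}[\Delta^2]\lesssim\E_{a\sim\pi_R}[\Delta^2]$ only ``up to an $e^{\cO(\eta)}$ factor'' (i.e., via $e^{u\Delta}\le e^{\eta\Delta}$ in the numerator and $\E[e^{u\Delta}]\ge 1$ in the denominator, which costs a factor $\E_{\pi_{R^*}}[e^{\eta\Delta}]\le e^{2\eta}$), and this factor then silently disappears when you ``collect the estimates'' into $\regret(T)\lesssim\eta\,\beta_T^2\,d(\cR,\lambda,T)$. What your argument as written proves is $\cO\big(\eta e^{2\eta}\log(N_\cR T/\delta)\cdot d(\cR,\lambda,T)\big)$, which is not the theorem: $\eta$ is an explicit parameter of the bound, the $\cO(\cdot)$ hides only absolute constants, and the linear-in-$\eta$ dependence is precisely the point of the result (the paper explicitly contrasts itself with prior analyses that pick up $e^\eta$ factors). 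The fix lives entirely inside your own framework: for $\Delta\ge 0$ the map $u\mapsto\E_{p_u}[\Delta^2]$ is nondecreasing, since $\frac{d}{du}\E_{p_u}[\Delta^2]=\E_{p_u}[\Delta^3]-\E_{p_u}[\Delta^2]\,\E_{p_u}[\Delta]=\Cov_{p_u}(\Delta^2,\Delta)\ge 0$ by Chebyshev's association inequality (equivalently, $p_\eta$ dominates $p_u$ in likelihood ratio along $\Delta$, so expectations of increasing functions of $\Delta$ are ordered). Hence $\Var_{p_u}(\Delta)\le\E_{p_u}[\Delta^2]\le\E_{p_\eta}[\Delta^2]=\E_{a\sim\pi_R}[\Delta^2]$ with constant $1$, and your double integral yields $G_x(R^*)-G_x(R)\le\frac{\eta}{2}\E_{a\sim\pi_R}[\Delta^2]$. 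This monotonicity is exactly the paper's step showing $U(\lambda)\le U(1)$ via $\E[X^3]\ge\E[X^2]\,\E[X]$ for $X\ge 0$; with it in place of your crude domination, your proof goes through and recovers the stated bound.
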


\begin{remark}
    Theorem \ref{th:bandit} establishes that the regret of Algorithm~\ref{alg:kl-bandits} scales logarithmically with $T$, rather than at the typical $\cO(\sqrt{T})$ rate, thereby improving upon the previous analysis in \citet{xiong2024iterative} under the standard online RL setting. To the best of our knowledge, this is the first analysis of a KL-regularized contextual bandit that achieves this logarithmic regret bound.
\end{remark}

\subsection{Proof Outline of Theorem~\ref{th:bandit}}
We first explain where the previous methods loosen and then highlight our methods to get the sharp bound.
\paragraph{Review of Analysis in Previous Work} 
Previous analysis \citep{xiong2024iterative,ye2024theoretical,song2024importance} neglects the pivotal role of KL-regularization in the suboptimality decomposition and reduce to the traditional bandit analysis. Specifically, using the short-hand notation $R(x,\pi)=\E_{a\sim\pi(\cdot|x)}R(x,a)$ and $\KL(\pi\| \pi')=\KL(\pi(\cdot|x)\|\pi'(\cdot|x))$, they derive that
\begin{equation*}
\small
\begin{aligned}
&R^*(x,\pi^*) - \KL(\pi^*\|\pi_\reff)- \big(R^*(x,\pi_t) - \KL(\pi_t\|\pi_\reff)\big)
    \notag\\
    &= \underbrace{R(x,\pi^*) - \hat{R}_t(x,\pi^*) - b_t(x,\pi^*)}_{\displaystyle{\le 0 ~\text{(By Optimism)}}}\notag\\
    &\qquad + \underbrace{\hat{R}_t(x,\pi_t) - R^*(x,\pi_t) + b_t(x,\pi_t)}_{\displaystyle{\le 2b_t(x,\pi_t)}}\notag\\
    &\qquad + \big(\hat R_t(x,\pi^*) + b_t(x,\pi^*) - \KL(\pi^*(\cdot|x)\|\pi_\reff(\cdot|x))\big) \notag\\
    &\qquad - \big(\hat R_t(x,\pi_t) + b_t(x,\pi^*) - \KL(\pi_t(\cdot|x)\|\pi_\reff(\cdot|x))\big),
    \label{eq:suboptimality-gap:loose}
\end{aligned}
\end{equation*}
where on the right-hand side of the last equality, the first two terms follow from the optimistic reward estimator and the definition of the bonus, the subtraction of the last two terms is upper bounded by $0$ since $\pi_t$ is the maximizer of the KL-regularized reward according to Algorithm \ref{alg:kl-bandits}. Essentially, deserting the last two terms is where they loosen and reduce to the traditional analysis. As a result, their regret is upper bounded by the first-order summation of bonuses $2\sum_{t=1}^T \E_{x\sim d_0} b_t(x,\pi_t)$ and finally get an $\tilde{\cO}(\sqrt{T})$ regret.

\paragraph{Our Analysis.}
We provide the proof sketch of Theorem~\ref{th:bandit} here. The details are provided in Appendix \ref{sec:proof_bandit}.
Our analysis is divided into the following four parts.
\paragraph{Part I: Single-Step Regret Decomposition.}
For all $t \ge 1$, with $R^*(x,a) = \frac{1}{\eta}\log \exp \big(\eta R^*(x,a)\big)$ and the closed-form solution of $\pi^*$ and $\pi$ from Lemma \ref{lem:kl_solu}, we have the following equality for the suboptimality gap: 
\begin{equation}
\small
\begin{aligned}
&\EE_{\pi^*} \bigg[R^*(x,a) - \frac{1}{\eta}\log\frac{\pi^*(a|x)}{\pi_{\reff}(a|x)}\bigg]\\
    &\quad - \EE_{\pi_{t}} \bigg[R^*(x, a) - \frac{1}{\eta}\log \frac{\pi_{t}(a|x)}{\pi_{\reff}(a|x)}\bigg]
    \\
    &= \frac{1}{\eta} \log Z_{R^*}(x) - \frac{1}{\eta} \log Z_{\hat R_{t - 1} + b_{t - 1} }(x) \\&\quad+ \rE_{\pi_{t}} \big[\hat R_{t - 1}(x, a) - R^*(x, a) + b_{t-1}(x, a)\big],
    \label{eq:suboptimality-gap:copy}
\end{aligned}
\end{equation}
where $Z_R(x)$ is the normalization term for the policy with respect to reward function $R$. We defer the detailed derivation of \eqref{eq:suboptimality-gap:copy} to Appendix \ref{sec:proof_bandit}.

\begin{remark}
    Compared to previous work \citep{xiong2024iterative, xie2024exploratory,ye2024theoretical} that neglect the KL term and reduce to the traditional RL analysis, the upper bound in \eqref{eq:suboptimality-gap:copy} is more refined and involves the gap of the normalization term $Z_R(x)$, which is crucial for the analysis of the KL-regularized objective.
\end{remark}

\paragraph{Part II: Shape of Regularized Suboptimality Gap.}
To further analyze \eqref{eq:suboptimality-gap:copy}, we define a function 
\begin{equation}
\small
    \begin{aligned}
        \Delta(x,R) := - \frac{1}{\eta} \log Z_R(x) + \E_{\pi_R^\eta} \big[R(x,a) - R^*(x,a)\big].
        \label{eq:delta-R:copy}
    \end{aligned}  
\end{equation} 
and write \eqref{eq:suboptimality-gap:copy} as the function gap:
\begin{equation}\label{eq:gap}
\small
    \begin{aligned}
        \rE_{x \sim d_0}[\Delta(x, \hat R_{t - 1} + b_{t - 1}) - \Delta(x, R^*)].
    \end{aligned}  
\end{equation} 
To analyze the behavior of \eqref{eq:gap}, we have the following lemma, which reveals the gradient of $\Delta(x,R)$ with respect to $R$.

\begin{lemma} \label{lem:computation1:copy}
Let $Z_R(x) = \sum_{a \in \cA} \pi_{\reff}(a|x) \cdot \exp(\eta R(x, a))$ and $\pi_R^{\eta}(a|x)=\pi_{\reff}(a|x)\exp(\eta R(x,a))/Z_R(x)$. We have the following partial gradient result:
\begin{equation}
\small
    \label{eqn:computation:sketch}
    \begin{aligned}
        &\frac{\partial\Delta(x,R)}{\partial R(x,a)} = \eta \pi_R^\eta(a|x) \cdot \big(R(x,a) - R^*(x,a)\big)  \\&- \eta \sum_{a' \in \cA} \pi_R^\eta(a|x) \cdot \pi_R^\eta(a'|x) \cdot \big(R(x,a') - R^*(x,a')\big). 
 \end{aligned}
\end{equation}
\end{lemma}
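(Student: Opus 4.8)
The plan is to verify \eqref{eqn:computation:sketch} by a direct differentiation, treating $x$ as fixed and viewing $\Delta(x,R)$ as a smooth function of the values $\{R(x,a)\}_{a\in\cA}$. First I would expand the definition \eqref{eq:delta-R:copy} into its two constituent pieces,
\[
\Delta(x,R) = -\frac{1}{\eta}\log Z_R(x) + \sum_{a'\in\cA}\pi_R^\eta(a'|x)\big(R(x,a')-R^*(x,a')\big),
\]
and differentiate each with respect to a fixed coordinate $R(x,a)$.

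For the log-normalizer, since $\partial Z_R(x)/\partial R(x,a) = \eta\,\pi_{\reff}(a|x)\exp(\eta R(x,a)) = \eta Z_R(x)\,\pi_R^\eta(a|x)$, the chain rule gives $\partial\big({-}\eta^{-1}\log Z_R(x)\big)/\partial R(x,a) = -\pi_R^\eta(a|x)$. The second ingredient is the standard softmax derivative, which I would record as a short intermediate computation:
\[
\frac{\partial \pi_R^\eta(a'|x)}{\partial R(x,a)} = \eta\,\pi_R^\eta(a'|x)\big(\mathds{1}[a'=a] - \pi_R^\eta(a|x)\big).
\]

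With these in hand, I apply the product rule to the expectation term. Writing $g_{a'} := R(x,a')-R^*(x,a')$ so that $\partial g_{a'}/\partial R(x,a) = \mathds{1}[a'=a]$, the derivative splits into the softmax-derivative contribution $\sum_{a'}\eta\,\pi_R^\eta(a'|x)\big(\mathds{1}[a'=a]-\pi_R^\eta(a|x)\big)g_{a'} = \eta\,\pi_R^\eta(a|x)\,g_a - \eta\,\pi_R^\eta(a|x)\sum_{a'}\pi_R^\eta(a'|x)\,g_{a'}$ plus the direct contribution $\sum_{a'}\pi_R^\eta(a'|x)\,\mathds{1}[a'=a] = \pi_R^\eta(a|x)$.

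The last step, and the only point requiring any care, is to collect terms and observe the crucial cancellation: the $+\pi_R^\eta(a|x)$ arising from differentiating $g_{a'}$ exactly cancels the $-\pi_R^\eta(a|x)$ coming from the log-normalizer, leaving precisely the claimed expression in \eqref{eqn:computation:sketch}. I do not anticipate a genuine obstacle, since the computation is elementary; the main things to get right are careful index bookkeeping in the softmax derivative and stating the cancellation explicitly. It is worth emphasizing this cancellation, because it shows the gradient vanishes at $R=R^*$ and drives the monotonicity of the suboptimality gap that the subsequent analysis of \eqref{eq:gap} and Theorem~\ref{th:bandit} relies on.
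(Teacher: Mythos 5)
Your proposal is correct and follows essentially the same route as the paper: a direct coordinate-wise differentiation splitting $\Delta(x,R)$ into the log-normalizer and expectation terms, using the softmax derivative (which the paper records case-by-case in Lemma~\ref{lem:computation1}, equivalent to your indicator form $\eta\,\pi_R^\eta(a'|x)(\mathds{1}[a'=a]-\pi_R^\eta(a|x))$), applying the product rule, and noting the cancellation of $\pm\pi_R^\eta(a|x)$ between the two pieces. No gaps; your unified indicator bookkeeping is if anything slightly cleaner than the paper's split into the $a'=a$ and $a'\neq a$ cases.
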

\vspace{-7pt}
The proof of this lemma is deferred to Appendix \ref{sec:proof_bandit}. With the gradient information above, we can apply the mean value theorem to show that
there exists an $f_\lambda = \lambda \cdot (\hat R_{t - 1} + b_{t - 1}) + (1 - \lambda) \cdot R^*$ for some $\lambda \in (0, 1)$ such that 
\begin{equation}
\small
\begin{aligned} 
    &\rE_{x \sim d_0}[\Delta(x, \hat R_{t - 1} + b_{t - 1}) - \Delta(x, R^*)]  
   \notag \\&= \rE_{x \sim d_0}\Big[\sum_{a\in\cA}\Big(\eta \pi_{f_\lambda}^\eta(a|x) \cdot \big(f_\lambda(x,a) - R^*(x,a)\big) \notag\\&\qquad  - \eta \sum_{a' \in \cA} \pi_{f_\lambda}^\eta(a|x) \cdot \pi_{f_\lambda}^\eta(a'|x) \cdot \big(f_\lambda(x,a') - R^*(x,a')\big)\Big)
    \notag\\&\qquad\qquad \cdot\big(\hat R_{t - 1}(x, a) + b_{t - 1}(x, a)-R^*(x,a)\big)\Big], \label{eq:mean-value:sketch_0}
\end{aligned}
\end{equation}

\paragraph{Part III: Benefits of Optimism.}
The bonus used in Algorithm \ref{alg:kl-bandits} ensures that with high probability, the true reward function is upper bounded by the optimistic reward function, which is formally stated in the following lemma.

\begin{lemma}\label{lm:Confidence Bound for Reward Function}
    Under Algorithm \ref{alg:kl-bandits} and Assumption \ref{as:Reward function realizability} and the condition that the noises $\epsilon_t$ are conditional $1$-sub-Gaussian, we have with probability at least $1-\delta$ for all $t\in[T]$, the uniform optimism event that $\cE_{\opt}^t=\{\hat{R}_t(x,a)+b_t(x,a)-R^*(x,a) \ge 0,~\forall(x,a)\in\cX\times\cA\}$ holds true.
\end{lemma}
Hence, conditioning on $\cE_{\opt}^t$ we have $f_\lambda(x,a') - R^*(x,a') \ge 0,\quad \forall(x,a)\in\cX\times\cA,$ which is substituted back into \eqref{eq:mean-value:sketch_0} to get 
\begin{equation}
\small
\begin{aligned} 
    &\rE_{x \sim d_0}[\Delta(x, \hat R_{t - 1} + b_{t - 1}) - \Delta(x, R^*)]  
     \\&\le \eta \rE_{x \sim d_0} \big[\sum_{a \in \cA} \bigl(\hat R_{t - 1}(x, a) + b_{t - 1}(x, a) - R^*(x, a)\bigr) 
     \\&\quad\cdot \underbrace{\bigl(f_\lambda(x, a) - R^*(x, a)\bigr) \cdot \pi_{f_\lambda}^\eta(a|x)}_{U(x,a)}\big]
     \\&\le \eta \rE_{x \sim d_0} \rE_{a\sim \pi_{\hat R_{t - 1} + b_{t - 1}}^\eta(\cdot|x)} \big[\\
     &\qquad \bigl(\hat R_{t - 1}(x, a) + b_{t - 1}(x, a) - R^*(x, a)\bigr)^2 \big], \label{eq:mean-value:sketch}
\end{aligned}
\end{equation}
where the last inequality holds since by taking the derivative of $U(x,a)$ with respect to $\lambda$, we can show that $U(x,a)$ reaches the maximum when $\lambda = 1$.

\paragraph{Part IV: Regret Bound.} Conditioning on high-probability event $\cup_{t\in[T]}\cE_\opt^t$, we derive the regret bound from \eqref{eq:mean-value:sketch}:
\begin{equation*}
\small
\begin{aligned}
    &\mathrm{Regret}(T) \le \eta\sum_{t=1}^T\rE_{x_t \sim d_0} \rE_{a_t\sim \pi_t} \big[\\
    &\qquad\qquad \bigl(\hat R_{t - 1}(x_t, a_t) + b_{t - 1}(x_t, a_t) - R^*(x_t, a_t)\bigr)^2 \big]\\
    &\le 4\eta\sum_{t=1}^T\rE_{x_t \sim d_0} \rE_{a_t\sim \pi_t} \bigl(b_{t - 1}(x_t, a_t)\bigr)^2,
\end{aligned}
\end{equation*}
Thus, according to the definition of bonus $b_t$ in \eqref{eq:bonus}, we can follow standard analyses for general function approximation under the finite eluder dimension to obtain the desired result.

\section{KL-Regularized RL}
\subsection{The Proposed Algorithm and Result}

\begin{algorithm}[ht]
    \caption{KL-regularized LSVI with UCB}
    \label{alg:kl-mdp}
    \begin{algorithmic}[1]
    \STATE \textbf{Input:} $\eta$, $\epsilon$, $\pi_{\reff}$, $\cF$.
    \FOR {episode $t$ = $1, \ldots, T$}
        \STATE Receive the intial state $s_1^t$.
        \FOR {stage $h = H, \ldots, 1$}
            \STATE Set $\hat f_h^t \gets \argmin_{f_h \in \cF_h} \sum_{i = 1}^{t - 1} (f_h(s_h^i, a_h^i) - r_h^i - \hV^t_{h + 1}(s_{h + 1}^i))^2$.
            \STATE Construct bonus $b_h^t(\cdot,\cdot)$ defined in \eqref{eq:bonus_mdp}.
            \STATE Set $\hat Q^t_h(s, a) \gets \hat f^t_h(s, a) + b^t_h(s, a)$ for any $(s, a) \in \cS \times \cA$.
            \STATE Set $\hV^t_h(s) \gets \max_{\pi} \EE_{a \sim \pi(\cdot | s)} \bigl[\hQ^t_h(s, a) - \eta^{-1} \cdot \KL(\pi(\cdot | s), \pi_{\reff}(\cdot | s))\bigr]$.
        \ENDFOR          
        \FOR {stage $h = 1 \ldots, H$}
            \STATE Observe $s_h^t$.
            \STATE Follow $\pi^t_h(\cdot | \cdot) \propto \pi_{\reff}(\cdot | \cdot) \exp\bigl[\eta \hat Q^t_h(\cdot, \cdot)\bigr]$.
            \STATE Choose action $a_h^t \sim \pi^t_h(\cdot | s_h^t)$.
        \ENDFOR
    \ENDFOR
    \end{algorithmic}
\end{algorithm}

We adapt the KL-regularization to the Least Square Value Iteration with UCB framework \citep{zhang2022feel,wang2020reinforcement} to propose the KL-LSVI-UCB algorithm in Algorithm \ref{alg:kl-mdp}, which establishes the optimistic $Q$-value function estimations $\hQ_h^t$ backward from step $H$ to step $1$, with $\hQ_{H+1}^t\equiv0$. In each step $h\in[H]$, we first learn the estimator $\hf_h^t$ to minimize the Bellman backward error:
\begin{align*}
    \hat f_h^t = \argmin_{f_h \in \cF_h} \sum_{i = 1}^{t - 1} (f_h(s_h^i, a_h^i) - r_h^i - \hV^t_{h + 1}(s_{h + 1}^i))^2.
\end{align*}
Then, under the principle of optimism, we construct the confidence set iteratively
\begin{equation*}
\small
\begin{aligned}
    &\cF_h^{t-1} = \Big\{f_h\in\cF_h:\\
    &\qquad\qquad \sum_{i=1}^{t-1}(f_h(s_h^i,a_h^i) - \hf_h^t(s_h^i,a_h^i))^2 +\lambda\le (\beta_h^T)^2\Big\},
\end{aligned}
\end{equation*}
and the bonus:
\begin{equation}
\label{eq:bonus_mdp}
\begin{aligned}
    b_h^t(s,a) = \min\Big\{1, \beta_h^T\cdot U^h_{\cF_h^{t-1}}(\lambda,s, a; \cD^h_{t-1})\Big\},
\end{aligned}
\end{equation}
where $U^h_{\cF_h^{t-1}}$ is the uncertainty formulated in Definition \ref{def:eluder}, and $\cD^h_{t-1}=\{s_h^i,a_h^i,r_h^i,s_{h+1}^i\}_{i\in[t-1]}$. Hence, we add the bonus to the value function estimation to get the optimistic estimation $\hat Q^t_h(s, a) =\hat f^t_h(s, a) + b^t_h(s, a)$ and 
\begin{equation*}
\begin{aligned}
    \hV^t_h(s) = \frac1\eta \log \EE_{a \sim \pi_{\reff,h}(\cdot |s)}\exp\big(\eta \hQ^t_{h}(s,a)\big).
\end{aligned}
\end{equation*}

For the analysis, we follow \citet{ye2023corruption,wang2020reinforcement} and assume that there exists a bonus function class $\cB_h$ with cardinality $N_{\cB_{h}}$ that accommodates bonus functions $b_h^t$. \citet{ye2023corruption,wang2020reinforcement} have used examples and algorithms to show that this type of bonus can be approximately calculated under suitable conditions.

\begin{theorem}\label{th:mdp}
Under Algorithm \ref{alg:kl-mdp} and Assumptions \ref{as:Realizability} and \ref{as:Bellman completeness} with $\beta_h^T = \Theta(\sqrt{\log(N_h T H / \delta)})$, we have with probability at least $1-\delta$,
$$
\mathrm{Regret}(T)
= \cO\Big(\eta H^2 d(\cF,\lambda,T) \cdot \log(N_{\cF\oplus\cB} TH /\delta)\Big),
$$
where $d(\cF,\lambda,T)=\sum_{h=1}^Hd(\cF_h,\lambda,T)$ and $N_{\cF\oplus\cB}=\max_h N_{\cF_h}\cdot N_{\cF_{h+1}}\cdot N_{\cB_{h+1}}$.
\end{theorem}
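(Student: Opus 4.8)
The plan is to lift the four-part argument behind Theorem~\ref{th:bandit} to the multi-step setting, treating each transition $h$ as a ``bandit layer'' whose effective reward is the one-step Bellman target $\cT_\eta^h\hat Q^t_{h+1}$. \emph{First}, I would establish a layer-wise optimism guarantee by backward induction on $h$: the least-squares estimator $\hat f^t_h$ concentrates around $\cT_\eta^h\hat Q^t_{h+1}$, which lies in $\cF_h$ by Bellman completeness (Assumption~\ref{as:Bellman completeness}), so that $|\hat f^t_h - \cT_\eta^h\hat Q^t_{h+1}| \le b^t_h$ uniformly. This is the MDP analogue of Lemma~\ref{lm:Confidence Bound for Reward Function}, now requiring a union bound over the product class of cardinality $N_{\cF\oplus\cB}$ because the regression target depends on $\hV^t_{h+1}$ and hence on both $\cF_{h+1}$ and the bonus class $\cB_{h+1}$. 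Adding $b^t_h$ gives $\hat Q^t_h(s,a) \ge \cT_\eta^h\hat Q^t_{h+1}(s,a) \ge Q^*_h(s,a)$, where the last inequality uses the inductive hypothesis $\hV^t_{h+1}\ge V^*_{h+1}$ and monotonicity of the log-sum-exp map in \eqref{eq:V_h}; the same monotonicity propagates optimism to $\hV^t_h \ge V^*_h$.

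\emph{Second}, I would prove the novel multi-step decomposition. With $\pi^t_h$ the Gibbs policy of $\hat Q^t_h$ and $\pi^*_h$ that of $Q^*_h$, the single-step identity of Lemma~\ref{lem:kl_solu} yields the exact recursion
\begin{equation*}
V^*_h(s) - V^{\pi^t}_h(s) = \tfrac1\eta \KL\big(\pi^t_h(\cdot|s)\,\|\,\pi^*_h(\cdot|s)\big) + \EE_{a\sim\pi^t_h,\,s'}\big[V^*_{h+1}(s') - V^{\pi^t}_{h+1}(s')\big],
\end{equation*}
since the gap between the optimal single-step value $V^*_h$ and the value of playing $\pi^t_h$ against reward $Q^*_h$ is exactly the Bregman/KL divergence to $\pi^*_h$, while the residual is the next-layer suboptimality under the state distribution induced by $\pi^t_h$. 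Telescoping over $h$ turns the episode regret into $\sum_{t}\sum_{h}\EE^{\pi^t}[\eta^{-1}\KL(\pi^t_h\|\pi^*_h)]$. Each KL term is a per-layer instance of the functional gap $\Delta$ of \eqref{eq:delta-R:copy}, so the gradient formula of Lemma~\ref{lem:computation1:copy} and the mean-value/optimism argument of Parts~II--III bound it by $\eta\,\EE_{a\sim\pi^t_h}[(\hat Q^t_h - Q^*_h)^2]$.

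\emph{Third}, the main obstacle is the cross-step coupling hidden in $\hat Q^t_h - Q^*_h$. Writing $\hat Q^t_h - Q^*_h = (\hat f^t_h - \cT_\eta^h\hat Q^t_{h+1} + b^t_h) + \EE_{s'}[\hV^t_{h+1} - V^*_{h+1}]$, the first bracket is the clean estimation error, bounded by $2b^t_h$, which contributes the desired second-order, squared-bonus term; but the second, \emph{propagation} term is only $\cO(H)$ and a priori first-order, so naively squaring it would destroy the logarithmic rate. The plan is to compare each layer not against $Q^*_h$ but against $\cT_\eta^h\hat Q^t_{h+1}$ (for which the gap is exactly $2b^t_h$), peel off the residual $\EE_{\pi^t_h}[\hV^t_{h+1}-V^*_{h+1}]-\EE_{\pi^\eta_{\bar f}}[\hV^t_{h+1}-V^*_{h+1}]$ produced by the mean-value policy $\pi^\eta_{\bar f}$, and bound it by a self-bounding estimate: since the propagation gap is nonnegative and uniformly $\cO(H)$, its variance-type contribution is at most $\cO(H)$ times the same propagation gap, which is itself summable through the optimism recursion. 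This self-bounding step converts the dangerous first-order propagation into a second-order quantity at the cost of one extra factor of $H$, and is where I expect the real difficulty to lie; it also explains the $H^2$ (one $H$ from the $H$ layers, one from controlling propagation).

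\emph{Finally}, I would collect the second-order terms $\sum_{t}\EE^{\pi^t}[(b^t_h)^2]$ for each $h$. Using $b^t_h=\min\{1,\beta^T_h\, U^h_{\cF_h^{t-1}}\}$ from \eqref{eq:bonus_mdp} and a martingale concentration to pass from the on-policy expectation to the realized trajectory $(s^t_h,a^t_h)$, the pigeonhole argument of Definition~\ref{def:eluder} gives $\sum_{t=1}^T \min\{1,[U^h_{\cF_h^{t-1}}]^2\} \le d(\cF_h,\lambda,T)$, hence $\sum_t \EE^{\pi^t}[(b^t_h)^2] = \cO((\beta^T_h)^2\, d(\cF_h,\lambda,T))$. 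Summing over $h$ with $\sum_h d(\cF_h,\lambda,T)=d(\cF,\lambda,T)$ and substituting $(\beta^T_h)^2 = \Theta(\log(N_{\cF\oplus\cB}TH/\delta))$ yields the claimed $\cO(\eta H^2 d(\cF,\lambda,T)\log(N_{\cF\oplus\cB}TH/\delta))$ bound, on the intersection of the layer-wise optimism events, which holds with probability at least $1-\delta$.
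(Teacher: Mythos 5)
Your Parts I, II, and IV are essentially the paper's proof. The union-bound concentration over the product class of cardinality $N_{\cF\oplus\cB}$, with optimism propagated backward through the log-sum-exp map, is exactly Lemma~\ref{lm:Confidence Sets_mdp}; and your telescoping identity $V_h^*-V_h^{\pi^t}=\eta^{-1}\KL(\pi^t_h\|\pi^*_h)+\EE_{\pi^t_h,s'}[V^*_{h+1}-V^{\pi^t}_{h+1}]$ produces, term by term, the same per-layer bandit gaps $\EE_{s_{h+1}\sim d^{\hat\pi}_{h+1}}\bigl[\eta^{-1}\KL(\hat\pi_{h+1}\|\pi^*_{h+1})\bigr]$ as the paper's concatenated-policy decomposition $\hat\pi^{(h)}=\hat\pi_{1:h}\oplus\pi^*_{h+1:H}$ in Lemma~\ref{lm:value decompose_mdp}, so Part II is a legitimate (arguably cleaner) rederivation of that lemma.

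The genuine gap is Part III. You correctly isolate the propagation term $\EE_{s'}[\hV^t_{h+1}-V^*_{h+1}]$ inside $\hat Q^t_h-Q^*_h$, but your proposed ``self-bounding'' treatment --- bounding its squared contribution by $\cO(H)$ times the \emph{first-order} propagation gap and asserting that the latter is ``summable through the optimism recursion'' --- goes in the wrong direction. Once a squared quantity is downgraded to a first-order one, its sum over rounds is controlled only by $\sum_t\EE[b^t_j]=\cO\bigl(\beta^T_j\sqrt{T\,d(\cF_j,\lambda,T)}\bigr)$, i.e.\ a $\sqrt{T}$ quantity, and the logarithmic rate is lost; nothing in your sketch shows that the first-order propagation gap sums to $\cO(\log T)$, and in general it does not. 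The paper's resolution keeps everything second order and needs no self-bounding device: conditioned on optimism, a change of measure from $\pi_{\reff}$ to the Gibbs policy of $\hat Q^t_{h+1}$ inside the log-partition function, followed by Jensen's inequality, gives the pointwise recursion
\begin{equation*}
0\;\le\; \hat Q^t_h(s_h,a_h)-Q^*_h(s_h,a_h)\;\le\; \EE^{\hat\pi}_{\cdot|s_h,a_h}\Bigl[\textstyle\sum_{j\ge h} e_j(s_j,a_j)\Bigr],
\qquad e_j:=\hat Q^t_j-\cT_\eta^j\hV^t_{j+1}\le 2b^t_j,
\end{equation*}
so that by Jensen and Cauchy--Schwarz $\bigl(\hat Q^t_h-Q^*_h\bigr)^2\le H\,\EE^{\hat\pi}\bigl[\sum_{j\ge h}e_j^2\bigr]\le 4H\,\EE^{\hat\pi}\bigl[\sum_{j\ge h}(b^t_j)^2\bigr]$, which remains a sum of squared bonuses and hence sums over $t$ to $\cO\bigl((\beta^T_j)^2 d(\cF_j,\lambda,T)\bigr)$. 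The two factors of $H$ in the final bound come from this Cauchy--Schwarz step together with the outer sum over $h$, not from any conversion between first- and second-order quantities. Replacing your Part III with this recursion turns your outline into the paper's proof; as written, Part III does not deliver the claimed $\cO(\log T)$ regret.
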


\subsection{Proof Outline of Theorem~\ref{th:mdp}}
We defer the detailed proof in Appendix \ref{sec:proof_mdp} and highlight the crucial part of the proof in the sequel.

The crucial part of the analysis for the MDP setting lies in decomposing the regret into the errors of Bellman backup in each step. For conciseness, we omit the superscript $t$ for estimators when there is no confusion. We denote the Bellman error as
$
e_h(s_h,a_h) = \hQ_h(s_h,a_h)-\cT_\eta^h \hV_{h+1}(s_h,a_h).
$

\paragraph{Review of Standard Policy Loss Decomposition.}
We first recall the standard decomposition without KL-regularization \citep{zhang2023mathematical}:
\begin{equation}
\small
\begin{aligned}
    &V_1^{\pi^*}(s_1) - V_1^{\hat\pi}(s_1) \le \hV_1(s_1) - V_1^{\hat\pi}(s_1)\\
    &\le \E^{\hat\pi}\big[\hQ_1(s_1,a_1) - \cT^1 \hV_2(s_1,a_1)\\
    &\qquad + \cT^1 \hV_2(s_1,a_1) - \cT^1 V_2^{\hat\pi}(s_1,a_1)\big]\\
    &\le \ldots \le \E^{\hat\pi}\Big[\sum_{h=1}^H \big(\hQ_h(s_h,a_h)-\cT_\eta^h \hV_{h+1}(s_h,a_h)\big) \Big],\label{eq:old_decomp}
\end{aligned}
\end{equation}
where $\E^{\hat\pi}$ denotes the expectation of trajectories generated by $d_0\times\hpi$, and $\cT$ denotes the Bellman operator without KL-regularization. Thus, following the standard decomposition results in the polynomial dependence on $T$ for the previous works \citep{xiong2024iterative,xie2024exploratory}.

\paragraph{Our Analysis.}
In this work, to avoid the direct summation of Bellman errors as in \eqref{eq:old_decomp}, we develop a novel decomposition to obtain the square of the Bellman errors. To realize this goal, instead of decomposing the value functions, we decompose the policies.

For each $h$, let $\hat \pi^{(h)} := \hat \pi_{1:h} \oplus \pi^*_{h+1:H}$ be the concatenated policy of $\hat \pi$ and $\pi^*$ at time step $h$. Then, $\pi^*$ can be denoted as $\hat\pi^{(0)}$ and $V_1^*=V_1^{\hat\pi^{(0)}}$. %In other words, we roll in by playing $\hat{\pi}$ to get $s_1 \sim d_0, a_1 \sim \hat{\pi}^1(\cdot|s_1), \cdots, a_{h-1} \sim \hat{\pi}^{h-1}(\cdot|s_{h-1}), s_h \sim \PP_{h-1}(\cdot|s_{h-1}, a_{h-1})$ but play $a_h \sim \pi^*_h(\cdot|s_h)$ at step $h$. 
Then the suboptimality gap for $\hat \pi$ can be decomposed as follows: 
$$
\small
\begin{aligned} 
    J(\pi^*) - J(\hat \pi) &= \sum_{h = 0}^{H - 1} \EE_{s_1 \sim d_0} \underbrace{\bigl[V_1^{\hat \pi^{(h)}}(s_1) - V_1^{\hat \pi^{(h + 1)}}(s_1)\bigr]}_{I_{h + 1}}.
\end{aligned}
$$
For each term $I_{h + 1}$, since $\hat \pi^{(h)}$ and $\hat \pi^{(h+1)}$ only differs at step $h+1$, and they are both $\pi_{h'}^*$ for $h'=h+2,\ldots,H$, we use the notation $\KL(\pi_h,\pi'_h):=\KL(\pi_h(\cdot|s_h)\|\pi'_h(\cdot|s_h))$ for short and can reduce the step-wise gap back into the bandit gap:
\begin{equation*}
\footnotesize
\begin{aligned}
&\EE_{s_{h + 1} \sim d_{h + 1}^{\hat\pi}}\Big\{Q_{h + 1}^{\pi^*}(s_{h + 1}, \pi^*_{h+1}) - \eta^{-1}\KL(\pi_{h+1}^*\| \pi_{\reff,h+1})\\
    &\qquad - \bigl[Q_{h + 1}^{\pi^*}(s_{h + 1}, \hpi_{h+1}) - \eta^{-1}\KL(\hpi_{h+1}, \pi_{\reff,h+1})\bigr]\Big\}\\
    &\le \eta\EE^{\hpi}\big[\big(\hQ_{h + 1}(s_{h + 1},a_{h+1}) - Q_{h + 1}^*(s_{h + 1},a_{h+1})\big)^2\big],
\end{aligned}
\end{equation*}
where the inequality holds by applying \eqref{eq:mean-value:sketch} with $R^*=Q_{h + 1}^{\pi^*}$ and $\hat{R}_t+b_t=\hQ_{h + 1}$. Therefore, we can obtain the following bound for each $(I_{h + 1})$.
\begin{lemma}\label{lm:value decompose_mdp}
We use the notation $\E^{\hpi}$ for the expectation of trajectories generated by $d_0\times\hpi$.
Conditioning on the uniform optimism event that $\cE_{\opt}=\{\hQ(s_h,a_h)-Q^*_h(s_h,a_h) \ge 0,~\forall(s_h,a_h)\in\cS\times\cA,~h\in[H]\}$ holds,
$$
\small
\begin{aligned}
    I_{h + 1} \le \eta\EE^{\hpi}_{\cdot|s_{h+1},a_{h+1}} \Big[\Big(\sum_{j=h+1}^H e_j(s_j,a_j) \Big)\Big)^2\Big].
\end{aligned}
$$
\end{lemma}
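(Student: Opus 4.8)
The plan is to reduce each one-step term $I_{h+1}$ to the single-step ``bandit'' gap already analyzed in Parts~II--III, to upgrade it to a squared value gap $(\hQ_{h+1}-Q^*_{h+1})^2$ via \eqref{eq:mean-value:sketch}, and then to replace that squared gap by the squared sum of Bellman errors through a backward telescoping. Throughout I would condition on $\cE_\opt$, so that $\delta_j:=\hQ_j-Q^*_j\ge 0$ pointwise for every $j$.

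First I would push the step-$1$ value gap forward to step $h+1$. Since $\hpi^{(h)}$ and $\hpi^{(h+1)}$ both follow $\hpi$ on the prefix $1,\dots,h$, the immediate rewards and the KL penalties on these steps are generated by the identical policy, so the value difference telescopes through the shared prefix and
$$
I_{h+1}=\EE_{s_{h+1}\sim d_{h+1}^{\hpi}}\bigl[V_{h+1}^{\hpi^{(h)}}(s_{h+1})-V_{h+1}^{\hpi^{(h+1)}}(s_{h+1})\bigr].
$$
Because both concatenated policies coincide with $\pi^*$ on the suffix $h+2,\dots,H$, their action-value functions at step $h+1$ are identical and equal $Q_{h+1}^{\pi^*}=Q^*_{h+1}$; the sole discrepancy is the action law at step $h+1$, namely $\pi^*_{h+1}$ versus $\hpi_{h+1}$. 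Hence the bracket is exactly the one-step KL-regularized objective gap with proxy reward $R^*=Q^*_{h+1}$, optimal policy $\pi^*_{h+1}$, and learned Gibbs policy $\hpi_{h+1}=\pi^\eta_{\hQ_{h+1}}$, and applying \eqref{eq:mean-value:sketch} with this identification gives $I_{h+1}\le \eta\,\EE^{\hpi}\bigl[(\hQ_{h+1}(s_{h+1},a_{h+1})-Q^*_{h+1}(s_{h+1},a_{h+1}))^2\bigr]$ with $a_{h+1}\sim\hpi_{h+1}$.

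The central step is to telescope $\delta_j$ into Bellman errors. Using $\hQ_j=\cT_\eta^{j}\hV_{j+1}+e_j$ by definition of the Bellman error and $Q^*_j=r_j+\EE_{s_{j+1}\mid s_j,a_j}[V^*_{j+1}]$, and subtracting (the rewards cancel), I obtain
$$
\delta_j(s_j,a_j)=e_j(s_j,a_j)+\EE_{s_{j+1}\mid s_j,a_j}\bigl[\hV_{j+1}(s_{j+1})-V^*_{j+1}(s_{j+1})\bigr].
$$
The key inequality is a one-sided control of the soft-max gap: since $\hV_{j+1}(s)=\EE_{\hpi_{j+1}}[\hQ_{j+1}]-\eta^{-1}\KL(\hpi_{j+1}\|\pi_{\reff,j+1})$ is the maximum of the KL-regularized objective for $\hQ_{j+1}$, while $V^*_{j+1}(s)$ is the corresponding maximum for $Q^*_{j+1}$ and therefore dominates the value of that same objective evaluated at the suboptimal policy $\hpi_{j+1}$, subtracting the two variational expressions (the matching KL terms cancel) yields $\hV_{j+1}(s)-V^*_{j+1}(s)\le \EE_{a\sim\hpi_{j+1}(\cdot|s)}[\delta_{j+1}(s,a)]$. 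Plugging this into the recursion and unrolling from $j=h+1$ down to $H$ with $\delta_{H+1}\equiv 0$ shows $0\le \delta_{h+1}(s_{h+1},a_{h+1})\le \EE^{\hpi}_{\cdot\mid s_{h+1},a_{h+1}}\bigl[\sum_{j=h+1}^{H}e_j(s_j,a_j)\bigr]$, the lower bound coming from optimism. Squaring this chain (valid precisely because $\delta_{h+1}\ge 0$) and applying Jensen's inequality to move the square inside the roll-out converts $\delta_{h+1}^2$ into $\EE^{\hpi}_{\cdot\mid s_{h+1},a_{h+1}}[(\sum_{j=h+1}^H e_j)^2]$; combining with the bound of the previous paragraph and the tower rule gives the claim.

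I expect the one-sided soft-max estimate to be the main obstacle, or at least the step easiest to get wrong: one must resist computing $\hV_{j+1}-V^*_{j+1}$ directly through the log-sum-exp, which only yields the reverse, Jensen-type bound $\hV_{j+1}-V^*_{j+1}\ge \EE_{\pi^*_{j+1}}[\delta_{j+1}]$, and instead exploit the optimality characterization of $V^*_{j+1}$ to obtain an upper bound against the \emph{executed} Gibbs policy $\hpi_{j+1}$. Recovering exactly $\hpi_{j+1}$ (rather than $\pi^*_{j+1}$) in the expectation is what makes the final quantity a genuine roll-out of the algorithm's policy $\hpi$, so that the per-step squared gaps can later be aggregated and bounded by the eluder dimension. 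The remaining care is purely bookkeeping of which policy governs the prefix, the pivot step, and the suffix when passing from $I_{h+1}$ to the one-step gap, and checking that optimism is invoked only where squaring requires a nonnegative quantity.
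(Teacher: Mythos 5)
Your proposal is correct and follows essentially the same route as the paper: the same reduction of $I_{h+1}$ to the one-step bandit gap via the concatenated policies, the same application of the single-step squared-gap bound with $R^*=Q^*_{h+1}$, and the same backward unrolling of $\hQ_j-Q^*_j$ into Bellman errors, squared under optimism and Jensen. The only cosmetic difference is that you obtain the key inequality $\hV_{j+1}-V^*_{j+1}\le \EE_{a\sim\hpi_{j+1}}[\hQ_{j+1}-Q^*_{j+1}]$ from the variational characterization of the KL-regularized value (optimality of $\hV$ at $\hpi_{j+1}$, suboptimality of $\hpi_{j+1}$ under $Q^*$), whereas the paper rewrites the soft-max gap as $\eta^{-1}\log\EE_{a\sim\hpi_{j+1}}e^{\eta(Q^*_{j+1}-\hQ_{j+1})}$ and applies Jensen's inequality; these are the same inequality proved in equivalent ways, and your warning about expanding around the executed policy $\hpi_{j+1}$ rather than $\pi^*_{j+1}$ is exactly the point the paper's computation respects.
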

The proof is shown in Appendix \ref{ssec:regret_decomp}. Therefore, we can decompose the regret as follows:
$$
\small
\begin{aligned}
    J(\pi^*) - J(\hat \pi) &\le \eta \sum_{h = 0}^{H - 1}\EE^{\hpi} \Big[\Big(\sum_{j=h+1}^H e_j(s_j,a_j) \Big)\Big)^2\Big]\\
    &\le \eta H^2 \EE^{\hpi} \Big[\sum_{h=1}^H (e_h(s_h,a_h))^2\Big],
\end{aligned}
$$
where the second inequality is the Cauchy-Schwarz inequality. Since the summation over $H$ steps is inside the square, we can only pay an additional order of $H$.

Then, following the standard analysis of MDP literature \citep{zhang2023mathematical}, we can demonstrate that the optimism $\cE_\opt$ holds with a high probability and obtain the desired bound.

\section{Conclusion}
In this paper, we study KL-regularized contextual bandits and MDPs in the standard online RL setting. While these frameworks have been widely applied in the empirical post-training of modern foundation generative models, their theoretical properties are substantially less explored. We propose two provably efficient algorithms: KL-regularized UCB and KL-regularized LSVI with UCB, based on the standard optimism principle and show that they achieve regret bounds that scale logarithmically with $T$, significantly improving over the typical $\cO(\sqrt{T})$ rate. To the best of our knowledge, this is the first theoretical analysis of KL-regularized RL to establish such logarithmic regret bounds in the standard online setting, aligning well with the empirically observed superior sample efficiency of KL-regularized RL \citep{bai2022training, deepseekai2025deepseekr1incentivizingreasoningcapability}. An expense of this logarithmic bound for MDPs is that the bound has additional dependence on the horizon $H$, which can be left as future work.

The key to this improvement lies in a refined value decomposition for the bandit setting and a novel policy decomposition technique for MDPs, both of which may be of independent interest. We hope our study inspires further theoretical investigations into the learning dynamics of KL-regularized RL.

\section*{Impact Statement}
This paper presents work whose goal is to advance the field of Machine Learning. There are many potential societal consequences of our work, none of which we feel must be specifically highlighted here.

\section*{Acknowledgements}
We thank the anonymous reviewers for their helpful comments. HZ and QG are supported in part by the National Science Foundation DMS-2323113 and IIS-2403400. HZ is also partially supported by the Amazon PhD fellowship. The views and conclusions contained in this paper are those of the authors and should not be interpreted as representing any funding agencies.

\bibliography{ref}
\bibliographystyle{icml2025}

%%%%%%%%%%%%%%%%%%%%%%%%%%%%%%%%%%%%%%%%%%%%%%%%%%%%%%%%%%%%%%%%%%%%%%%%%%%%%%%
%%%%%%%%%%%%%%%%%%%%%%%%%%%%%%%%%%%%%%%%%%%%%%%%%%%%%%%%%%%%%%%%%%%%%%%%%%%%%%%
% APPENDIX
%%%%%%%%%%%%%%%%%%%%%%%%%%%%%%%%%%%%%%%%%%%%%%%%%%%%%%%%%%%%%%%%%%%%%%%%%%%%%%%
%%%%%%%%%%%%%%%%%%%%%%%%%%%%%%%%%%%%%%%%%%%%%%%%%%%%%%%%%%%%%%%%%%%%%%%%%%%%%%%
\newpage
\appendix
\onecolumn

\section{Proofs for KL-Regularized Contextual Bandits} \label{sec:proof_bandit}
\begin{lemma}[Objective Decomposition]\label{lm:regret_decompose_bandit}
    For any $t\in[T]$, conditioning on the uniform optimism event that $\cE_{\opt}^t=\{\hat{R}_t(x,a)+b_t(x,a)-R^*(x,a) \ge 0,~\forall(x,a)\in\cX\times\cA\}$ holds, we have
    \begin{align*}
        J(\pi^*) - J(\pi_t) \le \eta \rE_{x \sim d_0} \rE_{a\sim \pi_t} \bigl[\bigl(\hat R_{t - 1}(x, a) + b_{t - 1}(x, a) - R^*(x, a)\bigr)^2 \bigr].
    \end{align*}
\end{lemma}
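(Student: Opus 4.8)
The plan is to chain together Parts I--III of the outline and then close the argument with a monotonicity computation. Write $g := \hat R_{t-1} + b_{t-1} - R^*$ for the optimism gap, and recall from Algorithm~\ref{alg:kl-bandits} that $\pi_t = \pi_{\hat R_{t-1}+b_{t-1}}^\eta$. First I would invoke Lemma~\ref{lem:kl_solu} to rewrite $J(\pi^*) = \rE_{x\sim d_0}[\eta^{-1}\log Z_{R^*}(x)]$ and to expand $J(\pi_t)$ through the closed form of $\pi_t$; subtracting and regrouping the normalization constants shows
\begin{equation*}
J(\pi^*) - J(\pi_t) = \rE_{x\sim d_0}\big[\Delta(x, \hat R_{t-1}+b_{t-1}) - \Delta(x, R^*)\big],
\end{equation*}
with $\Delta$ as in \eqref{eq:delta-R:copy}. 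This is exactly the Part~I decomposition specialized to $J$.

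Next, for each fixed $x$ I would treat $\Delta(x,\cdot)$ as a smooth function of the finite vector $(R(x,a))_{a\in\cA}$ and apply the mean value theorem along the segment from $R^*(x,\cdot)$ to $(\hat R_{t-1}+b_{t-1})(x,\cdot)$, producing some $f_{\lambda_x} = \lambda_x(\hat R_{t-1}+b_{t-1}) + (1-\lambda_x)R^*$ with $\lambda_x\in(0,1)$. Substituting the gradient from Lemma~\ref{lem:computation1:copy} and using $f_{\lambda_x}-R^* = \lambda_x g$, the diagonal and cross terms combine into a single variance,
\begin{equation*}
\Delta(x,\hat R_{t-1}+b_{t-1}) - \Delta(x,R^*) = \eta\lambda_x\,\Var_{a\sim\pi_{f_{\lambda_x}}^\eta}\big(g(x,a)\big).
\end{equation*}
Conditioning on $\cE_{\opt}^t$ gives $g\ge 0$ pointwise, so bounding $\Var \le \E[g^2]$ reduces the task to controlling $\phi_x(\lambda) := \lambda\,\E_{a\sim\pi_{f_\lambda}^\eta}[g(x,a)^2]$ evaluated at $\lambda=\lambda_x$.

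The crux---and the step I expect to be the main obstacle---is showing $\phi_x(\lambda_x) \le \phi_x(1) = \E_{a\sim\pi_t}[g^2]$, i.e.\ that $\phi_x$ is nondecreasing on $[0,1]$. Differentiating the Gibbs weight gives $\tfrac{d}{d\lambda}\pi_{f_\lambda}^\eta(a|x) = \eta\,\pi_{f_\lambda}^\eta(a|x)\big(g(x,a) - \E_{\pi_{f_\lambda}^\eta}[g]\big)$, whence
\begin{equation*}
\phi_x'(\lambda) = \E_{\pi_{f_\lambda}^\eta}[g^2] + \lambda\eta\,\Cov_{a\sim\pi_{f_\lambda}^\eta}\big(g(x,a)^2,\, g(x,a)\big).
\end{equation*}
Here optimism enters decisively: since $g\ge 0$, the maps $g\mapsto g^2$ and $g\mapsto g$ are comonotone, so the covariance is nonnegative and $\phi_x'\ge 0$. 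The remaining delicate points are routine: (i) justifying the mean value theorem and the interchange of $\tfrac{d}{d\lambda}$ with the sum over $\cA$, which is immediate since $\cA$ is treated as a finite/countable sum and every quantity is smooth in $\lambda$; and (ii) handling the context-dependent multiplier $\lambda_x$, which causes no difficulty because the inequality $\phi_x(\lambda)\le\phi_x(1)$ holds for \emph{every} $\lambda\in[0,1]$ uniformly. Taking $\rE_{x\sim d_0}$ of the pointwise chain $\eta\lambda_x\Var(g)\le\eta\lambda_x\E[g^2]=\eta\phi_x(\lambda_x)\le\eta\phi_x(1)=\eta\E_{\pi_t}[g^2]$ then yields precisely
\begin{equation*}
J(\pi^*)-J(\pi_t)\le\eta\,\rE_{x\sim d_0}\rE_{a\sim\pi_t}\big[(\hat R_{t-1}+b_{t-1}-R^*)^2\big],
\end{equation*}
completing the argument.
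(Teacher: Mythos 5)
Your proof is correct and follows essentially the same route as the paper: the same $\Delta$-gap decomposition from Part I, the same mean value theorem step using the gradient of Lemma~\ref{lem:computation1:copy}, and the same optimism-driven monotonicity in $\lambda$, where your comonotonicity argument for $\Cov_{\pi_{f_\lambda}^\eta}(g^2,g)\ge 0$ is equivalent to the paper's inequality $\E[X^3]\ge\E[X^2]\cdot\E[X]$ for $X\ge 0$. The only cosmetic difference is that you package the gradient contraction directly as $\eta\lambda_x\Var_{\pi_{f_{\lambda_x}}^\eta}(g)$, which makes discarding the cross term automatic via $\Var(g)\le\E[g^2]$ (the paper invokes optimism at that step, though it is not actually needed there), and you allow a context-dependent $\lambda_x$, which you correctly handle since $\phi_x(\lambda)\le\phi_x(1)$ holds for all $\lambda\in[0,1]$.
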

\begin{proof}
For all $t \ge 1$, with $R^*(x,a) = \frac{1}{\eta}\log \exp \big(\eta R^*(x,a)\big)$ and the closed-form solution of $\pi^*$ and $\pi$ from Lemma \ref{lem:kl_solu}, we have the following equality for the suboptimality gap: 
\begin{align} 
    &\EE_{\pi^*} \bigg[R^*(x,a) - \frac{1}{\eta}\log\frac{\pi^*(a|x)}{\pi_{\reff}(a|x)}\bigg] - \EE_{\pi_{t}} \bigg[R^*(x, a) - \frac{1}{\eta}\log \frac{\pi_{t}(a|x)}{\pi_{\reff}(a|x)}\bigg] \notag
    \\&= \frac{1}{\eta} \rE_{\pi^*} \biggl[\log \frac{\pi_{\reff}(a|x) \cdot \exp(\eta R^*(x, a))}{\pi^*(a|x)}\biggr] - \frac{1}{\eta} \rE_{\pi_{t}} \biggl[\log \frac{\pi_{\reff}(a|x) \cdot \exp(\eta R^*(x, a))}{\pi_{t}(a|x)}\biggr]\notag
    \\&= \frac{1}{\eta} \rE_{\pi^*} \biggl[\log \frac{\pi_{\reff}(a|x) \cdot \exp(\eta R^*(x, a))}{\pi^*(a|x)}\biggr] - \frac{1}{\eta} \rE_{\pi_{t}} \biggl[\log \frac{\pi_{\reff}(a|x) \cdot \exp(\eta (\hat R_{t - 1} (x, a) + b_{t - 1}(x, a)))}{\pi_{t}(a|x)}\biggr] \notag
    \\&\quad + \rE_{\pi_{t}} \big[\hat R_{t - 1}(x, a) - R^*(x, a) + b_{t-1}(x, a)\big]\notag\\
    &= \frac{1}{\eta} \log Z_{R^*}(x) - \frac{1}{\eta} \log Z_{\hat R_{t - 1} + b_{t - 1} }(x) + \rE_{\pi_{t}} \big[\hat R_{t - 1}(x, a) - R^*(x, a) + b_{t-1}(x, a)\big],
    \label{eq:suboptimality-gap}
\end{align}
where $Z_R(x) := \sum_{a \in \cA} \pi_{\reff}(a|x) \cdot \exp(\eta R(x, a))$ is the normalization term for the policy with respect to reward function $R$.

We define $\Delta(x,R)$ as the following quantity for a reward function $R: \cX \times \cA \to \RR$: \begin{equation}
\begin{aligned}
    \Delta(x,R) :=& -\frac{1}{\eta} \log Z_R(x) + \sum_{a \in \cA}  \frac{ \pi_{\reff}(a|x) \cdot \exp(\eta R(x, a))}{Z_R(x)}\cdot \bigl(R(x, a) - R^*(x, a)\bigr)\\
    &= - \frac{1}{\eta} \log Z_R(x) + \E_{\pi_R^\eta} \big[R(x,a) - R^*(x,a)\big].
    \label{eq:delta-R}
\end{aligned}  
    \end{equation} 
With the definition of $\Delta(x,a)$, the gap in \eqref{eq:suboptimality-gap} can be written as $\rE_{x \sim d_0}[\Delta(x, \hat R_{t - 1} + b_{t - 1}) - \Delta(x, R^*)]$. To analyze the behavior of $\rE_{x \sim d_0}[\Delta(x, \hat R_{t - 1} + b_{t - 1}) - \Delta(x, R^*)]$, we now study the gradient of $\Delta(x,R)$. We are interested in the gradient of $\Delta(x,R)$ with respect to $R(x,a)$ for each $a\in\cA$.

Then, by invoking Lemma \ref{lem:computation1} to simplify the computation, we can write the gradient as follows:
\begin{equation*}
    \begin{aligned}
        \frac{\partial\Delta(x,R)}{\partial R(x,a)} &= \frac{\partial\big[- \frac{1}{\eta} \log Z_R(x) + \E_{\pi_R^\eta} \big[R(x,a) - R^*(x,a)\big]\big]}{\partial R(x,a)}\\
        &= - \frac{1}{\eta} \frac{1}{Z_R(x)} \frac{\partial Z_R(x)}{\partial R(x,a)} + \frac{\partial}{\partial R(x,a)}\big[ \pi_R^\eta(a|x) \cdot  \big(R(x,a) - R^*(x,a)\big)\big] \\
        &\qquad +  \frac{\partial}{\partial R(x,a)}\big[\sum_{a' \neq a} \pi_R^\eta(a'|x) \cdot  \big(R(x,a') - R^*(x,a')\big)\big] \\
        &= - \pi_R^\eta(a|x) + \Big[ \pi_R^\eta(a|x) + \frac{\partial \pi^\eta_R(a|x)}{\partial R(x,a)} \cdot \big(R(x,a') - R^*(x,a')\big)  \Big]\\
        &\qquad + \sum_{a' \neq a} \frac{\partial \pi^\eta_R(a'|x)}{\partial R(x,a)} \cdot \big(R(x,a) - R^*(x,a)\big)\\
        &= \big[\eta \pi_R^\eta(a|x) - \eta \pi_R^\eta(a|x)^2\big] \cdot \big(R(x,a) - R^*(x,a)\big) - \sum_{a' \neq a}\eta \pi_R^\eta(a|x) \cdot \pi_R^\eta(a'|x) \cdot \big(R(x,a') - R^*(x,a')\big) \\
        &= \eta \pi_R^\eta(a|x) \cdot \big(R(x,a) - R^*(x,a)\big)  - \eta \sum_{a' \in \cA} \pi_R^\eta(a|x) \cdot \pi_R^\eta(a'|x) \cdot \big(R(x,a') - R^*(x,a')\big), 
 \end{aligned}
\end{equation*}
where the third equality uses \eqref{eqn:computation1} and the fourth equality uses \eqref{eqn:computation2} and \eqref{eqn:computation3}, the second equality and the last equality simply use the linearity of $\partial$ and summation. 

Since $\frac{\partial \Delta(x,R)}{\partial R(x,a)}$ is uniformly bounded when $R$ is uniformly bounded, by the Mean Value Theorem, there exists $f_\lambda = \lambda \cdot (\hat R_{t - 1} + b_{t - 1}) + (1 - \lambda) \cdot R^*$ for some $\lambda \in [0, 1]$ such that 
\begin{align} 
    &\rE_{x \sim d_0}[\Delta(x, \hat R_{t - 1} + b_{t - 1}) - \Delta(x, R^*)]  \notag
    \\&= \rE_{x' \sim d_0}\Big[\sum_{a\in\cA}\frac{\partial \Delta(x',f_\lambda)}{\partial R(x,a)}(\hat R_{t - 1}(x, a) + b_{t - 1}(x, a)-R^*(x,a))\Big] \notag
    \\&= \rE_{x' \sim d_0}\Big[\sum_{a\in\cA}\Big(\eta \pi_{f_\lambda}^\eta(a|x) \cdot \big(f_\lambda(x,a) - R^*(x,a)\big)  - \eta \sum_{a' \in \cA} \pi_{f_\lambda}^\eta(a|x) \cdot \pi_{f_\lambda}^\eta(a'|x) \cdot \big(f_\lambda(x,a') - R^*(x,a')\big)\Big)\notag
    \\&\qquad\qquad \cdot(\hat R_{t - 1}(x, a) + b_{t - 1}(x, a)-R^*(x,a))\Big] \notag
    \\&\le \eta \rE_{x \sim d_0} \bigl[\sum_{a \in \cA} \bigl(\hat R_{t - 1}(x, a) + b_{t - 1}(x, a) - R^*(x, a)\bigr) \cdot \underbrace{\bigl(f_\lambda(x, a) - R^*(x, a)\bigr) \cdot \pi_{f_\lambda}^\eta(a|x)}_{U(x,a)}\bigr], \label{eq:mean-value}
\end{align}
where the last inequality holds since conditioning on $\cE_{\opt}^t$ we have 
\begin{align*}
    f_\lambda(x,a') - R^*(x,a') = \lambda(\hat R_{t - 1}(x, a) + b_{t - 1}(x, a)-R^*(x,a))) \ge 0,\quad \forall(x,a)\in\cX\times\cA.
\end{align*}

We use the notation $\delta_t=\hat R_t + b_t - R^*$.
Let 
\[
U(\lambda) = \sum_{a\in\cA}\lambda\delta_{t-1}(x,a)^2 \cdot \frac{\pi_0(a|x)\cdot\exp\big(\eta(\lambda\delta_{t-1}(x,a) + R^*(x,a))\big)}{\EE_{\pi_0}\exp\big(\eta(\lambda\delta_{t-1}(x,a) + R^*(x,a))\big)},\quad \text{for}~\lambda\in[0,1].
\]
We take the derivative as follows:
\begin{align*}
    \frac{\partial U(\lambda)}{\partial \lambda} =& \sum_{a\in\cA} \Big[\delta_{t-1}(x,a)^2 \pi_{f_\lambda}^\eta + \lambda\delta_{t-1}(x,a)^2\cdot\Big(\frac{\pi_0(a|x)\exp\big(\eta(\lambda\delta_{t-1}(x,a) + R^*(x,a))\big)\cdot\eta\delta_{t-1}(x,a)}{\EE_{\pi_0}\exp\big(\eta(\lambda\delta_{t-1}(x,a) + R^*(x,a))\big)}\Big)\\
    &\qquad - \frac{\pi_0(a|x)\exp\big(\eta(\lambda\delta_{t-1}(x,a) + R^*(x,a))\big)\cdot\EE_{\pi_0}\eta\delta_{t-1}(x,a) \cdot\exp\big(\eta(\lambda\delta_{t-1}(x,a) + R^*(x,a))\big)}{\big(\EE_{\pi_0}\exp\big(\eta(\lambda\delta_{t-1}(x,a) + R^*(x,a))\big)\big)^2}\Big]\\
    =& \sum_{a\in\cA}\Big[\delta_{t-1}(x,a)^2 \pi_{f_\lambda}^\eta + \lambda\delta_{t-1}(x,a)^2 \pi_{f_\lambda}^\eta\cdot\eta\big(\delta_{t-1}(x,a) - \EE_{\pi_{f_\lambda}^\eta}\delta_{t-1}(x,a)\big)\Big]\\
    =& \EE_{\pi_{f_\lambda}^\eta}\big[\delta_{t-1}(x,a)^2\big] + \lambda\Big\{\EE_{\pi_{f_\lambda}^\eta}\big[\delta_{t-1}(x,a)^3\big] - \EE_{\pi_{f_\lambda}^\eta}\big[\delta_{t-1}(x,a)^2\big]\cdot \EE_{\pi_{f_\lambda}^\eta}\big[\delta_{t-1}(x,a)\big] \Big\}\\
    \ge& 0,
\end{align*}
where the inequality holds since for random variable $X \ge 0$ we have 
\begin{align*}
    \EE[X^3] - \EE[X^2] \cdot \EE[X] &= \EE[(X^2 - \EE[X^2])(X - \EE[X])]
    \\&= \EE[\big(X^2 - (\EE[X])^2\big)(X - \EE[X])] + \EE[\big((\EE[X])^2 -\EE[X^2]\big)(X - \EE[X])]
    \\&= \EE[\big(X + \EE[X]\big)(X - \EE[X])^2] + 0
    \\&\ge 0,
\end{align*}, which follows from Lemma 2.14 in \citep{zhao2025nearly}.
Therefore, since $f_\lambda(x,a)\le \hat R_{t - 1}(x, a) + b_{t - 1}(x, a)$ for any $(x,a)\in\cX\times\cA$, we can obtain that
\begin{align*}
    U(\lambda) \le U(1) = \sum_{a\in\cA}\delta_{t-1}(x,a)^2\cdot \pi_{\hat R_{t - 1} + b_{t - 1}}^\eta(a|x).
\end{align*}

We further have the following bound for RHS of \eqref{eq:mean-value}:

    \begin{align*} 
    J(\pi^*) - J(\pi_t) =& 
    \rE_{x \sim d_0}[\Delta(x, \hat R_{t - 1} + b_{t - 1}) - \Delta(x, R^*)]\\
    \le& \eta \rE_{x \sim d_0} \rE_{a\sim \pi_{\hat R_{t - 1} + b_{t - 1}}^\eta(\cdot|x)} \bigl[\bigl(\hat R_{t - 1}(x, a) + b_{t - 1}(x, a) - R^*(x, a)\bigr)^2 \bigr]. 
    \end{align*}
\end{proof}

\begin{lemma} \label{lem:computation1}
Let $Z_R(x) := \sum_{a \in \cA} \pi_{\reff}(a|x) \cdot \exp(\eta R(x, a))$ and $\pi_R^{\eta}(a|x)=\frac{\pi_{\reff}(a|x)\exp(\eta R(x,a))}{Z_R(x)}$. We have the following partial gradient results:
    \begin{equation} \label{eqn:computation1}
        \frac{\partial Z_R(x)}{\partial R(x,a)} = \eta \pi_{\reff}(a|x) \exp\big(\eta R(x,a)\big),
    \end{equation}
    \begin{equation}\label{eqn:computation2}
        \frac{\partial \pi^\eta_R(a|x)}{\partial R(x,a)} = \eta \pi_R^\eta(a|x) - \eta \pi_R^\eta(a|x)^2.
    \end{equation}
    For $a' \neq a$, we have
    \begin{equation}\label{eqn:computation3}
        \frac{\partial \pi^\eta_R(a'|x)}{\partial R(x,a)} = -\eta \pi_R^\eta(a|x) \cdot \pi_R^\eta(a'|x).
    \end{equation}
\end{lemma}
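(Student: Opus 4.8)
The plan is to establish all three identities by direct differentiation, taking care throughout to track which quantities depend on the scalar coordinate $R(x,a)$ and which do not. For \eqref{eqn:computation1}, I would write out $Z_R(x) = \sum_{a'' \in \cA} \pi_{\reff}(a''|x)\exp(\eta R(x,a''))$ and observe that, among all summands, only the one indexed by $a'' = a$ depends on $R(x,a)$. Differentiating that single term by the chain rule produces the factor $\eta$ from the exponential, giving $\eta \pi_{\reff}(a|x)\exp(\eta R(x,a))$ at once.

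For \eqref{eqn:computation2}, I would apply the quotient rule to $\pi_R^\eta(a|x) = \pi_{\reff}(a|x)\exp(\eta R(x,a))/Z_R(x)$. Abbreviating the numerator as $N = \pi_{\reff}(a|x)\exp(\eta R(x,a))$, I note that both $N$ and the denominator $Z_R(x)$ have the same partial derivative $\eta N$ with respect to $R(x,a)$: the former by the chain rule, and the latter by \eqref{eqn:computation1}. The quotient rule then yields $[\eta N \cdot Z_R(x) - N \cdot \eta N]/Z_R(x)^2 = \eta\, N/Z_R(x) - \eta\, (N/Z_R(x))^2$, which is exactly $\eta \pi_R^\eta(a|x) - \eta \pi_R^\eta(a|x)^2$.

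For \eqref{eqn:computation3} with $a' \neq a$, the numerator $\pi_{\reff}(a'|x)\exp(\eta R(x,a'))$ of $\pi_R^\eta(a'|x)$ is constant in $R(x,a)$, so only the factor $1/Z_R(x)$ needs to be differentiated. Using $\partial \big(1/Z_R(x)\big)/\partial R(x,a) = -Z_R(x)^{-2}\,\partial Z_R(x)/\partial R(x,a)$ together with \eqref{eqn:computation1}, and then regrouping the resulting product into the two normalized factors $\pi_R^\eta(a'|x)$ and $\pi_R^\eta(a|x)$, produces $-\eta \pi_R^\eta(a'|x)\cdot\pi_R^\eta(a|x)$.

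Since each identity reduces to a one-line application of the chain and quotient rules, I do not expect any substantive obstacle here; the lemma is purely computational. The only point that requires genuine care is correctly bookkeeping the dependence structure — in particular, recognizing that the normalization $Z_R(x)$ depends on $R(x,a)$ even when the policy coordinate of interest is some $a' \neq a$, which is precisely the source of the cross term in \eqref{eqn:computation3}. Establishing \eqref{eqn:computation1} first and reusing it in the other two derivations keeps the argument short and avoids recomputing $\partial Z_R(x)/\partial R(x,a)$.
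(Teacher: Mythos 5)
Your proposal is correct and matches the paper's own proof essentially step for step: establish \eqref{eqn:computation1} by noting only the $a''=a$ summand of $Z_R(x)$ depends on $R(x,a)$, then reuse it in a quotient-rule computation for \eqref{eqn:computation2} and in differentiating the $1/Z_R(x)$ factor for \eqref{eqn:computation3}. No gaps; the bookkeeping point you flag (that $Z_R(x)$ depends on $R(x,a)$ even when $a'\neq a$) is exactly what the paper's derivation relies on as well.
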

\begin{proof}
    The proof is deferred to Appendix \ref{s:Technical lemma proof}.
\end{proof}

\begin{lemma}\label{lm:Confidence Bound for Reward Function}
    Under Algorithm \ref{alg:kl-bandits} and Assumption \ref{as:Reward function realizability} and the condition that the noises $\epsilon_t$ are conditional $1$-sub-Gaussian, we have with probability at least $1-\delta$ for all $t\in[T]$, the uniform optimism event that $\cE_{\opt}^t=\{\hat{R}_t(x,a)+b_t(x,a)-R^*(x,a) \ge 0,~\forall(x,a)\in\cX\times\cA\}$ holds true.
\end{lemma}
\begin{proof}
By invoking Lemma \ref{lm:Generalization error of reward function} with $\cF=\cR$, $\cZ=\cX\times\cA$ and $\eta=1$,we have for all $t\in[T]$ with probability at least $1-\delta$,
    \begin{align}\label{eq:aaa}
        \sum_{i=1}^t(\hat{R}_t(x_i,a_i) - R^*(x_i,a_i))^2 \le 8\log(N_{\cR}T/\delta) = \frac{1}{2}\beta_T^2.
    \end{align}
Hence, we deduce that for any $(x,a)\in\cX\times\cA$,
\begin{align*}
    R^*(x,a) - \hat{R}_t(x,a) &\le \sup_{R_1,R_2\in\cR}\frac{|R_1(x,a)-R_2(x,a)|}{\sqrt{\lambda+\sum_{i=1}^{t}(R_1(x_i,a_i)-R_2(x_i,a_i))^2}}\cdot\sqrt{\lambda+\sum_{i=1}^{t}(R^*(x_i,a_i) - \hat{R}_t(x_i,a_i))^2}\\
    &\le U_{\cR_{t}}(\lambda,x,a;\cD_{t})\cdot\sqrt{\lambda+\frac{1}{2}\beta_T^2}\\
    &\le U_{\cR_{t}}(\lambda,x,a;\cD_{t})\cdot\beta_T,
\end{align*}
where the second inequality follows from the definition of $U_{\cR}$ and \eqref{eq:aaa}, and the last inequality uses $\lambda\le\frac{1}{2}\beta_T^2$ and the choice of bonus $b_t$. Besides, since $R^*(x,a) - \hat{R}_t(x,a)\le 1$, we have
$$
R^*(x,a) - \hat{R}_t(x,a) \le b_t(x,a).
$$

Therefore, we get
$$
\hat{R}_t(x,a)+b_t(x,a)-R^*(x,a) \ge 0.
$$
\end{proof}

\begin{proof}[Proof of Theorem \ref{th:bandit}]
Let
$$
U_t := \sup_{R_1,R_2\in\cR_{t-1}}\frac{|R_1(x
_t,a_t)-R_2(x_t,a_t)|}{\sqrt{\lambda+\sum_{i=1}^{t-1}(R_1(x_i,a_i)-R_2(x_i,a_i))^2}}.
$$
We know from Lemma \ref{lm:Confidence Bound for Reward Function} that $\cup_{t\in[T]}\cE_{\opt}^t$ holds with probability at least $1-\delta$. Thus, conditioning on $\cup_{t\in[T]}\cE_{\opt}^t$, we invoke Lemma \ref{lm:regret_decompose_bandit} to get
\begin{align*}
    \mathrm{Regret}(T) &= \sum_{t=1}^T(J(\pi^*)-J(\pi_t))\\
    &\le \eta\sum_{t=1}^T\rE_{x_t \sim d_0} \rE_{a_t\sim \pi_t} \bigl[\bigl(\hat R_{t - 1}(x_t, a_t) + b_{t - 1}(x_t, a_t) - R^*(x_t, a_t)\bigr)^2 \bigr]\\
    &\le 4\eta\sum_{t=1}^T\rE_{x_t \sim d_0} \rE_{a_t\sim \pi_t} \bigl(b_{t - 1}(x_t, a_t)\bigr)^2\\
    &= 4\eta\beta_T^2 \sum_{t=1}^T\rE_{x_t \sim d_0} \rE_{a_t\sim \pi_t} \min\Big\{1, \big(U_{\cR_{t-1}}(\lambda,x_t,a_t;\cD_{t-1})\big)^2\Big\}\\
    &\le \cO\Big(\eta\log(N_{\cR}T/\delta)\cdot d(\cR,\lambda,T)\Big),
\end{align*}
where the second inequality uses the condition $\cE_{t-1}$, the second equality uses the definition of bonus $b_t$ in \eqref{eq:bonus}, and the last inequality follows from the value of $\beta_T$ and the eluder dimension in Definition \ref{def:eluder}.
\end{proof}

\section{Proofs for KL-Regularized RL}\label{sec:proof_mdp}
\subsection{Regret Decomposition}\label{ssec:regret_decomp}

\begin{lemma}\label{lm:value decompose_mdp_restate}
We use the short-hand notation $\E^{\hpi}$ for the expectation of trajectories generated by $d_0\times\hpi$, and 
$$
e_h(s_h,a_h) = \hQ_h(s_h,a_h)-\cT_\eta^h \hV_{h+1}(s_h,a_h) = \hQ_h(s_h,a_h) - \E_{r_h,s_{h+1}|s_h,a_h}[r_h + \hV_{h+1}(s_{h+1})].
$$
Conditioning on the uniform optimism event that $\cE_{\opt}=\{\hQ(s_h,a_h)-Q^*_h(s_h,a_h) \ge 0,~\forall(s_h,a_h)\in\cS\times\cA,~h\in[H]\}$ holds, we have
    $$
    J(\pi^*) - J(\hat \pi) \le \eta \EE^{\hpi} \sum_{h = 1}^H \Big[\Big(\sum_{j=h}^H e_j(s_j,a_j) \Big)\Big)^2\Big].
    $$
\end{lemma}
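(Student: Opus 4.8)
The plan is to prove the bound by a \emph{policy decomposition} that telescopes the suboptimality gap $J(\pi^*)-J(\hat\pi)$ into $H$ single-step gaps, each reduced to the bandit-style estimate already established in \eqref{eq:mean-value:sketch}. First I would introduce the concatenated (hybrid) policies $\hat\pi^{(h)} := \hat\pi_{1:h}\oplus\pi^*_{h+1:H}$ for $h=0,\dots,H$, so that $\hat\pi^{(0)}=\pi^*$ and $\hat\pi^{(H)}=\hat\pi$. Since $J(\pi)=\EE_{s_1\sim d_0}V_1^\pi(s_1)$, I may write
$$
J(\pi^*)-J(\hat\pi)=\sum_{h=0}^{H-1}\EE_{s_1\sim d_0}\bigl[V_1^{\hat\pi^{(h)}}(s_1)-V_1^{\hat\pi^{(h+1)}}(s_1)\bigr]=\sum_{h=0}^{H-1}\EE_{s_1\sim d_0}[I_{h+1}].
$$
The key structural observation is that $\hat\pi^{(h)}$ and $\hat\pi^{(h+1)}$ agree on steps $1,\dots,h$ (both use $\hat\pi$) and on steps $h+2,\dots,H$ (both use $\pi^*$), differing only at step $h+1$. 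Hence $s_{h+1}$ is drawn from the same law $d_{h+1}^{\hat\pi}$ under both, and the continuation from step $h+1$ uses $\pi^*$ in both value functions, so $I_{h+1}$ collapses to a single-step KL-regularized gap at step $h+1$ evaluated against $Q_{h+1}^{\pi^*}=Q_{h+1}^*$: one policy plays the Gibbs law $\pi^*_{h+1}$ induced by $Q_{h+1}^*$, the other plays $\hat\pi_{h+1}$, the Gibbs law induced by $\hQ_{h+1}$.

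Second, I would recognize this single-step gap as exactly the object analyzed in the bandit proof, under the identification $R^*\leftrightarrow Q_{h+1}^*$ and $\hat R_{t-1}+b_{t-1}\leftrightarrow\hQ_{h+1}$. Conditioning on $\cE_{\opt}$ so that $\hQ_{h+1}\ge Q_{h+1}^*$ plays the role of the optimism event, \eqref{eq:mean-value:sketch} directly yields
$$
I_{h+1}\le\eta\,\EE_{s_{h+1}\sim d_{h+1}^{\hat\pi}}\EE_{a_{h+1}\sim\hat\pi_{h+1}}\bigl[(\hQ_{h+1}(s_{h+1},a_{h+1})-Q_{h+1}^*(s_{h+1},a_{h+1}))^2\bigr].
$$

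Third, and this is the crux, I would convert $\hQ_{h+1}-Q_{h+1}^*$ into an expected sum of Bellman errors along the $\hat\pi$-trajectory. The key \emph{value-to-$Q$} inequality is
$$
\hV_{h+1}(s)-V_{h+1}^*(s)\le\EE_{a\sim\hat\pi_{h+1}}\bigl[\hQ_{h+1}(s,a)-Q_{h+1}^*(s,a)\bigr],
$$
which holds because $\hat\pi_{h+1}$ maximizes the KL-regularized objective defined by $\hQ_{h+1}$ (so it achieves $\hV_{h+1}$ exactly) but is merely feasible for the objective defined by $Q_{h+1}^*$ (so it lower-bounds $V_{h+1}^*$); subtracting, the two identical KL penalties cancel. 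Combining this with the Bellman recursion $\hQ_h-Q_h^*=e_h+\EE_{s_{h+1}|s_h,a_h}[\hV_{h+1}-V_{h+1}^*]$ and unrolling from step $h+1$ down to $H$ gives
$$
\hQ_{h+1}(s_{h+1},a_{h+1})-Q_{h+1}^*(s_{h+1},a_{h+1})\le\EE^{\hpi}_{\cdot|s_{h+1},a_{h+1}}\Bigl[\textstyle\sum_{j=h+1}^H e_j(s_j,a_j)\Bigr].
$$
Using optimism once more to ensure the left side is nonnegative, I would square both sides, apply Jensen's inequality to move the square inside the conditional expectation, substitute back into the bound on $I_{h+1}$, and sum over $h=0,\dots,H-1$, reindexing to reach $\eta\,\EE^{\hpi}\sum_{h=1}^H(\sum_{j=h}^H e_j)^2$.

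I expect the main obstacle to be this third step: establishing the value-to-$Q$ inequality with the correct direction and checking that the recursive unrolling remains consistent with $\cE_{\opt}$, which is needed both to invoke \eqref{eq:mean-value:sketch} and to license the squaring. Care is also required in the first step to confirm that the hybrid-policy difference genuinely reduces to a single-step bandit gap under the shared state law $d_{h+1}^{\hat\pi}$, since this is precisely what permits reusing the contextual-bandit analysis verbatim.
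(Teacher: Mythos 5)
Your proposal is correct and follows essentially the same route as the paper's proof: the same hybrid-policy telescoping $\hat\pi^{(h)} = \hat\pi_{1:h}\oplus\pi^*_{h+1:H}$, the same reduction of each $I_{h+1}$ to the single-step bandit bound (Lemma~\ref{lm:regret_decompose_bandit}) with $R^* = Q^{\pi^*}_{h+1}$ and $\hat R_{t-1}+b_{t-1} = \hQ_{h+1}$, and the same unrolling of $\hQ_{h+1}-Q^*_{h+1}$ into $\EE^{\hpi}_{\cdot|s_{h+1},a_{h+1}}\big[\sum_{j\ge h+1} e_j\big]$ followed by squaring under optimism and Jensen's inequality. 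The only difference is how you justify the intermediate inequality $\hV_j - V^*_j \le \EE_{a\sim\hat\pi_j}\big[\hQ_j - Q^*_j\big]$: you argue via optimality-versus-feasibility of $\hat\pi_j$ for the two KL-regularized objectives, whereas the paper writes $V^*_j - \hV_j$ as a log-moment-generating function under $\hat\pi_j$ and applies Jensen — two equivalent one-line justifications of the same step.
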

\begin{proof}
For each $h$, let $\hat \pi^{(h)} := \hat \pi_{1:h} \oplus \pi^*_{h+1:H}$ be the concatenated policy of $\hat \pi$ and $\pi^*$ at time step $h$. Then, $\pi^*$ can be denoted as $\hat\pi^{(0)}$ and $V_1^*=V_1^{\hat\pi^{(0)}}$. %In other words, we roll in by playing $\hat{\pi}$ to get $s_1 \sim d_0, a_1 \sim \hat{\pi}^1(\cdot|s_1), \cdots, a_{h-1} \sim \hat{\pi}^{h-1}(\cdot|s_{h-1}), s_h \sim \PP_{h-1}(\cdot|s_{h-1}, a_{h-1})$ but play $a_h \sim \pi^*_h(\cdot|s_h)$ at step $h$. 
Then the suboptimality gap for $\hat \pi$ can be decomposed as follows: 
\begin{align*} 
    J(\pi^*) - J(\hat \pi) = V_1^{\pi^*}(s_1) -  V_1^{\hat \pi}(s_1)= \sum_{h = 0}^{H - 1} \EE_{s_1 \sim d_0} \underbrace{\bigl[V_1^{\hat \pi^{(h)}}(s_1) - V_1^{\hat \pi^{(h + 1)}}(s_1)\bigr]}_{I_{h + 1}}.
\end{align*}

For each term $I_{h+1}$, we deduce that
\begin{align*}
    &\EE_{s_1 \sim d_0, s_{h + 1} \sim d_{h + 1}^{\hat\pi}} \bigl[V_{h + 1}^{\hat \pi^{(h)}}(s_{h + 1}) - V_{h + 1}^{\hat \pi^{(h + 1)}}(s_{h + 1})\bigr]\\
    &= \EE_{s_1 \sim d_0, s_{h + 1} \sim d_{h + 1}^{\hat\pi}}\Big\{ \EE_{\pi^*_{h+1}}\bigl[Q_{h + 1}^{\hat \pi^{(h)}}(s_{h + 1},a_{h+1}) - \eta^{-1}\KL(\pi_{h+1}^*(\cdot|s_{h+1})\| \pi_{\reff,h+1}(\cdot|s_{h+1}))\bigr]\\
    &\qquad - \EE_{\hpi_{h+1}}\bigl[Q_{h + 1}^{\hat \pi^{(h+1)}}(s_{h + 1},a_{h+1}) - \eta^{-1}\KL(\hpi_{h+1}(\cdot|s_{h+1})\| \pi_{\reff,h+1}(\cdot|s_{h+1}))\bigr]\Big\}\\
    &= \EE_{s_1 \sim d_0, s_{h + 1} \sim d_{h + 1}^{\hat\pi}}\Big\{ \EE_{\pi^*_{h+1}}\bigl[Q_{h + 1}^{\pi^*}(s_{h + 1},a_{h+1}) - \eta^{-1}\KL(\pi_{h+1}^*(\cdot|s_{h+1})\| \pi_{\reff,h+1}(\cdot|s_{h+1}))\bigr]\\
    &\qquad - \EE_{\hpi_{h+1}}\bigl[Q_{h + 1}^{\pi^*}(s_{h + 1},a_{h+1}) - \eta^{-1}\KL(\hpi_{h+1}(\cdot|s_{h+1})\| \pi_{\reff,h+1}(\cdot|s_{h+1}))\bigr]\Big\},
\end{align*}
where the second equality follows from the fact that $\hpi^{(h)}_l=\hpi^{(h+1)}_l=\pi_h^*$ for $l=h+2,\ldots,H$.%\todoq{typo. this pi is not defined}

Then, we can follow the analysis of Lemma \ref{lm:regret_decompose_bandit} for each step $h+1$ with $d_0=d_{h + 1}^{\hat\pi}$, $R^*=Q_{h + 1}^{\pi^*}$, $\pi^*=\pi^*_{h+1}$ and $\hat\pi=\hpi_{h+1}$ to derive that
\begin{align*}
I_{h+1}
    \le \eta\EE^{\hpi}\big[\big(\hQ_{h + 1}(s_{h + 1},a_{h+1}) - Q_{h + 1}^*(s_{h + 1},a_{h+1})\big)^2\big].
\end{align*}
Further, conditioning on $\cE_{\opt}$, we deduce that
\begin{align*}
0 &\geq Q^*_{h+1}(s_{h+1},a_{h+1})-\hat{Q}_{h+1}(s_{h+1},a_{h+1})\\
&=
\EE_{s_{h+2}|s_{h+1},a_{h+1}} (V_{h+2}^*(s_{h+2})-\hat{V}_{h+2}(s_{h+2})) - e_{h+1}(s_{h+1},a_{h+1}) \\
&= 
\EE^{\hpi}_{\cdot|s_{h+1},a_{h+1}}
  \eta^{-1}\log \E_{a_{h+2}\sim\pi_{\reff,h+2}} e^{\eta(Q_{h+2}^*(s_{h+2},a_{h+2})-\hat{Q}_{h+2}(s_{h+2},a_{h+2}))}
- e_{h+1}(s_{h+1},a_{h+1})\\
&\geq  
\EE^{\hpi}_{\cdot|s_{h+1},a_{h+1}} \left[
  \big(Q_{h+2}^*(s_{h+2},a_{h+2})-\hat{Q}_{h+2}(s_{h+2},a_{h+2})\big)
-e(s_{h+1},a_{h+1}) \right] \\
&\geq  \cdots \\
&\geq  - \E^{\hpi}_{\cdot|s_{h+1},a_{h+1}} \sum_{j\geq h+1}
e_j(s_j,a_j),
\end{align*}
where second inequality holds due to the Jensen's inequality. Taking this back to $I_{h+1}$ leads to 
\begin{align*}
I_{h+1}
    &\le \eta\EE^{\hpi}\big[\big(\E^{\hpi}_{\cdot|s_{h+1},a_{h+1}} \sum_{j\geq h+1}
e_j(s_j,a_j)\big)^2\big]\\
    &\le \eta\EE^{\hpi}_{\cdot|s_{h+1},a_{h+1}} \Big[\Big(\sum_{j=h+1}^H e_j(s_j,a_j) \Big)\Big)^2\Big],
\end{align*}
where the second inequality uses the Jensen's inequality.

Therefore, we get
\begin{align*}
    J(\pi^*) - J(\hat \pi) &\le \eta \sum_{h = 0}^{H - 1}\EE^{\hpi} \Big[\Big(\sum_{j=h+1}^H e_j(s_j,a_j) \Big)\Big)^2\Big].
\end{align*}
\end{proof}

\begin{lemma}[Confidence Sets for Value Function]\label{lm:Confidence Sets_mdp}
    Let $\hat f^t_h$ be the estimator as defined in Algorithm~\ref{alg:kl-mdp} and
    $$
    \beta_h^T = 4\sqrt{\log(4N_h T H / \delta)},
    $$ 
    where we define $N_h=N_{\cF_h}\cdot N_{\cF_{h+1}}\cdot N_{\cB_{h+1}(\lambda)}$.
    Then, with probability at least $1 - \delta$, we have for all $t\in[T]$ and $h\in[H]$,
    \begin{align*} 
        &| \hat f^t_h(s, a) - \cT_\eta^h \hV^t_{h+1}(s, a) | \le \beta^T_h\cdot U^h_{\cF_h}(s, a; \cD^h_{t-1}) \quad \forall (s, a) \in \cS \times \cA,\\
        &\hQ^t_h(s_h,a_h)-Q^*_h(s_h,a_h) \ge 0,
    \end{align*}
    where $\cD^h_{t-1}$ denotes the history data $\{x_h^i,a_h^i\}_{i\in[t-1]}$.
\end{lemma}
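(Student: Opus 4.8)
The plan is to establish the two conclusions in sequence: the pointwise regression bound comes from a uniform least-squares concentration argument, and the optimism $\hQ^t_h\ge Q^*_h$ then follows by backward induction on $h$. For the first inequality, I would treat $\hat f^t_h$ as the empirical risk minimizer of a regression whose response $r_h^i+\hV^t_{h+1}(s_{h+1}^i)$ has conditional mean $\cT_\eta^h \hV^t_{h+1}(s_h^i,a_h^i)$ given $(s_h^i,a_h^i)$, so the residuals form a martingale-difference sequence with respect to the filtration generated by the trajectories. Invoking the generic least-squares generalization bound (Lemma \ref{lm:Generalization error of reward function}) would yield an in-sample bound $\sum_{i=1}^{t-1}\big(\hat f^t_h(s_h^i,a_h^i)-\cT_\eta^h \hV^t_{h+1}(s_h^i,a_h^i)\big)^2 = \cO(\log(N_h TH/\delta)) \le (\beta_h^T)^2-\lambda$. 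Bellman completeness (Assumption \ref{as:Bellman completeness}) guarantees $\cT_\eta^h \hV^t_{h+1}\in\cF_h$, and combined with the in-sample bound this places the target in the confidence set $\cF_h^{t-1}$; since $\hat f^t_h$ is its center, both functions lie in $\cF_h^{t-1}$. Plugging this pair into the definition of the uncertainty (Definition \ref{def:eluder}) and clearing the denominator gives $|\hat f^t_h(s,a)-\cT_\eta^h \hV^t_{h+1}(s,a)|\le \sqrt{\lambda+\sum_i(\cdots)^2}\cdot U^h_{\cF_h}(s,a;\cD^h_{t-1}) \le \beta_h^T\, U^h_{\cF_h}(s,a;\cD^h_{t-1})$, mirroring the bandit argument in Lemma \ref{lm:Confidence Bound for Reward Function}.

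For the optimism, I would induct backward from $h=H+1$, where $\hQ^t_{H+1}\equiv 0 = Q^*_{H+1}$ is immediate. Assuming $\hQ^t_{h+1}\ge Q^*_{h+1}$ pointwise, monotonicity of the soft-max map $f\mapsto \eta^{-1}\log\E_{a\sim\pi_{\reff,h+1}}\exp(\eta f)$ gives $\hV^t_{h+1}\ge V^*_{h+1}$ pointwise, and since the Bellman backup $r_h+\E_{s_{h+1}|s,a}[\cdot]$ is monotone we obtain $\cT_\eta^h \hV^t_{h+1}(s,a)\ge \cT_\eta^h V^*_{h+1}(s,a)=Q^*_h(s,a)$. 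It then suffices to show $\hQ^t_h=\hat f^t_h+b^t_h\ge \cT_\eta^h \hV^t_{h+1}$: by the first part $\hat f^t_h-\cT_\eta^h \hV^t_{h+1}\ge -\beta_h^T U^h_{\cF_h}(s,a;\cD^h_{t-1})$, and the bonus $b^t_h=\min\{1,\beta_h^T U^h_{\cF_h^{t-1}}(\lambda,s,a;\cD^h_{t-1})\}$ is constructed to dominate this one-sided error, since either the uncertainty term matches it or the truncation at $1$ dominates because all value functions take values in $[0,1]$. Chaining the two inequalities closes the induction and yields $\hQ^t_h\ge Q^*_h$.

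The main obstacle is the data-dependence of the regression target in the first part. Because $\hV^t_{h+1}$ is assembled from the estimators $\hat f^t_{h+1}\in\cF_{h+1}$ and $b^t_{h+1}\in\cB_{h+1}$ computed from the same trajectory data, the target $\cT_\eta^h \hV^t_{h+1}$ is not a fixed function, so a naive fixed-function concentration fails. The remedy is a union bound over the finite parametrization of $\hV^t_{h+1}$ through $\cF_{h+1}$ and $\cB_{h+1}$, together with a union over $\cF_h$ for $\hat f^t_h$; this is exactly what the product cardinality $N_h=N_{\cF_h}\cdot N_{\cF_{h+1}}\cdot N_{\cB_{h+1}(\lambda)}$ and the choice $\beta_h^T=4\sqrt{\log(4N_h TH/\delta)}$ absorb. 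One must verify that the martingale-difference structure of the residuals survives this uniformization and that Bellman completeness (extended as needed to the realizations of $\hV^t_{h+1}$) keeps the regression target inside $\cF_h$, so that the eluder conversion remains legitimate; the truncation in $b^t_h$ and the $[0,1]$-boundedness of all quantities are then routine to handle once this union bound is in place.
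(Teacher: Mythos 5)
Your proposal is correct and follows essentially the same route as the paper's proof: a least-squares concentration bound made uniform over the finite class of possible targets $\{f_{h+1}+b_{h+1}: f_{h+1}\in\cF_{h+1}, b_{h+1}\in\cB_{h+1}\}$ (this is exactly what the product cardinality $N_h$ encodes), converted to a pointwise bound via Bellman completeness and the uncertainty definition, followed by backward induction for optimism using monotonicity of the soft-max operator $f\mapsto \eta^{-1}\log\E_{\pi_{\reff}}\exp(\eta f)$. Your treatment is, if anything, slightly more careful than the paper's in explicitly handling the truncation at $1$ in the bonus and in flagging that completeness must be invoked for the bonus-augmented value $\hV^t_{h+1}$ rather than for raw elements of $\cF_{h+1}$.
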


\begin{proof}
For $k \in [K]$, $h \in [H]$, let $\cS^t_h$ be the $\sigma$-algebra generated by the random variables representing the state-action pairs up to and including those that appear stage $h$ of episode $k$. That is, $\cS^t_h$ is generated by
\begin{align*}
s_1^1,a_1^1, \dots, s_h^1,a_h^1, &\dots, s_H^1,a_H^1\,, \\
s_1^2,a_1^2, \dots, s_h^2,a_h^2, &\dots, s_H^2,a_H^2\,, \\
\vdots \\
s_1^t,a_1^t,\dots, s_h^t,a_h^t & \,.
\end{align*}

    Let $\cQ_{h + 1}, V_{h + 1}$ be the set of possible value functions, i.e. \begin{align*}
        &\cQ_{h + 1} = \big\{g: \cS \times \cA \to \RR | \exists f_{h + 1} \in \cF_{h + 1}, b_{h + 1} \in \cB_{h + 1}\ \mathrm{s.t.}\  g = f_{h + 1} + b_{h + 1}\big\}, \\
        &\cV_{h + 1} = \big\{g: \cS \to \RR | \exists Q_{h + 1} \in \cQ_{h + 1} \ \mathrm{s.t.}\ g(\cdot) = \max_{a \in \cA} Q_{h + 1}(\cdot, a)\big\}.
    \end{align*}
    The proof is based on the standard concentration inequality for the least square estimator \citep{zhang2023mathematical}.

    For simplicity, we denote by $e_h^t$ the noise term in the least square estimator with respect to an arbitrary value function $V_{h + 1} \in \cV_{h + 1}$, i.e., \begin{align*} 
        e_h^t := r_h^t + V_{h+1}(s_{h + 1}^t) - \cT_\eta^h V_{h+1}(s_h^t, a_h^t).
    \end{align*}

    By definition, \begin{align*}
        \EE[e_h^t | \cS^t_h] = 0, \quad| e_h^t | \le 1.
    \end{align*}
    % Consider an arbitrary $f_h$ in $\cF_h$, we have \begin{align} 
    %     &\sum_{i = 1}^{t - 1} \big[f_h(s_h^i, a_h^i) - r_h^i - \hV^t_{h+1}(s_{h + 1}^i)\big]^2 - \sum_{i = 1}^{t - 1} \big[\cT_\eta \hV^t_{h+1}(s_h^i, a_h^i) - r_h^i - \hV^t_{h+1}(s_{h + 1}^i)\big]^2 \notag \\&= \sum_{i = 1}^{t - 1} \big[f_h(s_h^i, a_h^i) - \cT_\eta \hV^t_{h+1}(s_h^i, a_h^i)\big]^2 - 2 \sum_{i = 1}^{t - 1} \big[f_h(s_h^i, a_h^i) - \cT_\eta \hV^t_{h+1}(s_h^i, a_h^i)\big] \cdot e_h^i. \label{eq:square-diff}
    % \end{align}

    % For the scond term in the RHS, it follows from \eqref{eq:transit-noise} that with probability at least $1 - \delta / (4N_{\cF_h} t^2)$, \begin{align}
    %     \sum_{i = 1}^{t - 1} e_h^i \cdot \big[f_h(s_h^i, a_h^i) - \cT_\eta \hV^t_{h+1}(s_h^i, a_h^i)\big] \le \sqrt{\log(4N_{\cF_h} t^2 / \delta)\sum_{i = 1}^{t - 1} \big[f_h(s_h^i, a_h^i) - \cT_\eta \hV^t_{h+1}(s_h^i, a_h^i)\big]^2} \label{eq:general-noise-bound:1}
    % \end{align}

    Hence, by invoking Lemma \ref{lm:Generalization error of reward function} and using a union bound over $\cV_{h + 1}$, we have for all $t\in[T]$ and $h\in[H]$, with probability at least $1 - \delta$,
    \begin{align} \label{eq:aac}
        \sum_{i = 1}^{t - 1} \big[\hat f^t_h (s_h^i, a_h^i) - \cT_\eta^h \hV^t_{h+1}(s_h^i, a_h^i)\big]^2 \le 8 \log(4N_h t H / \delta) \le \frac{1}{2}(\beta^T_h)^2,
    \end{align} where the first inequality follows from the definition of $N_h$ and $|\cV_{h + 1}| \le N_{\cF_{h+1}} \cdot N_{\cB_{h + 1}}$.
    Then we can complete the proof using union bound over all $t \ge 1$.

    Therefore, for any $(s,a)\in\cS\times\cA$, we have
    \begin{align}\label{eq:aaz}
        | \hat f^t_h(s, a) - \cT_\eta^h \hV^t_{h+1}(s, a) | &\le U^h_{\cF_h}(s, a; \cD^h_{t-1}) \cdot \sqrt{\lambda + \sum_{i = 1}^{t - 1} \big[\hat f^t_h (s_h^i, a_h^i) - \cT_\eta^h \hV^t_{h+1}(s_h^i, a_h^i)\big]^2}\notag\\
        &\le (\beta_h^T)^2\cdot U^h_{\cF_h}(s, a; \cD^h_{t-1}),
    \end{align}
    where the first inequality holds by using the completeness of the Bellman operator $\cT_\eta^h \hV^t_{h+1}\in\cF_h$ and the definition of uncertainty $U^h_{\cF_h}$, and the last inequality applies \eqref{eq:aac} and $\lambda\le \frac{1}{2}(\beta^T_h)^2$.

Moreover, we will prove $\hQ^t_h(s_h,a_h)-Q^*_h(s_h,a_h) \ge 0,\forall~h\in[H]$ by induction. At step $H+1$, we know that $Q^*_h=\hQ\equiv0$. Assume that at step $h+1$, we have
$$
\hQ^t_{h+1}(s_{h+1},a_{h+1})-Q^*_{h+1}(s_{h+1},a_{h+1}) \ge 0.
$$
Then, at step $h$, we obtain that
\begin{align*}
    \hQ^t_h(s_h,a_h)-Q^*_h(s_h,a_h) &= \hf^t_h(s_h,a_h) + b_h^t(s_h,a_h) - \cT_\eta^h V_{h+1}^*(s_h,a_h)\\
    &\ge \cT_\eta^h\hV_{h+1}^t(s_h,a_h) - (\beta_h^T)^2\cdot U^h_{\cF_h}(s, a; \cD^h_{t-1}) + (\beta_h^T)^2\cdot U^h_{\cF_h}(s, a; \cD^h_{t-1}) - \cT_\eta^h V_{h+1}^*(s_h,a_h)\\
    &\ge \frac{1}{\eta}\E_{s_{h+1}|s_h,a_h} \Big[\log\E_{a_{h+1}\sim \pi_{\reff,h+1}(\cdot|s_{h+1})} \exp\big(\eta(\hQ_{h+1}^t(s_{h+1},a_{h+1}) - Q_{h+1}^*(s_{h+1},a_{h+1}))\big)\Big]\\
    &\ge 0,
\end{align*}
where the first inequality uses \eqref{eq:aaz} and the definition of bonus $b_h^t$ in \eqref{eq:bonus}, the second inequality follows from the derivation of the value function in \eqref{eq:V_h}, and the last inequality is due to the induction hypothesis at step $h+1$.
\end{proof}

\begin{proof}[Proof of Theorem \ref{th:mdp}]
From Lemma \ref{lm:Confidence Sets_mdp} we know that $\cE_{\opt}$ holds at all time step $t\in[T]$ with probability a least $1-\delta$. Hence, by applying Lemma \ref{lm:value decompose_mdp} for each $t\in[T]$, we have
\begin{align*}
     J(\pi^*) - J(\hat \pi^t) &\le \eta \EE^{\pi^t} \sum_{h = 1}^H \Big[\Big(\sum_{j=h}^H (\hQ^t_j(s_j,a_j)-\cT_\eta^j \hV^t_{j+1}(s_j,a_j)) \Big)\Big)^2\Big]\\
    &\le \eta H \EE^{\pi^t} \sum_{h = 1}^H \sum_{j=h}^H \Big[\Big(\hQ^t_j(s_j,a_j)-\cT_\eta^j \hV^t_{j+1}(s_j,a_j)\Big)^2\Big]\\
    &\le \eta H^2 \EE^{\pi^t} \sum_{h = 1}^H \Big[\Big(\hQ^t_h(s_h,a_h)-\cT_\eta^h \hV^t_{h+1}(s_h,a_h)\Big)^2\Big],
\end{align*}
where the second inequality applies the Jensen's inequality.

Then, by Lemma \ref{lm:Confidence Sets_mdp}, with probability at least $1 - \delta$, we have for all $t\in[T]$ and $h\in[H]$,
\begin{align*}
    \hQ^t_h(s_h,a_h)-\cT_\eta^h \hV^t_{h+1}(s_h,a_h) &= b^t_h(s_h,a_h) + \hat f^t_h(s_h, a_h) - \cT_\eta^h \hV^t_{h+1}(s_h, a_h)\\
    &\le b^t_h(s_h,a_h) + \beta^T_h\cdot U^h_{\cF_h}(s_h, a_h; \cD^h_{t-1})\\
    &= 2\beta^T_h\cdot U^h_{\cF_h}(s_h, a_h; \cD^h_{t-1}),
\end{align*}
where the last equality uses the definition of bonus $b^t_h$.

Therefore, we have
    \begin{align*}
        \mathrm{Regret}(T) &= \sum_{t = 1}^T \bigl[J(\pi^*) - J(\hat \pi^t)\bigr]
        \\&\le 4 \eta H^2 \sum_{t = 1}^T \sum_{h = 1}^H (\beta^T_h)^2 \EE_{s_1 \sim d_0, \hat \pi}[(U^h_{\cF_h}(s_h, a_h; \cD^h_{t-1}))^2]
        \\&= \cO\Big(\eta H^2 d(\cF,\lambda,T) \cdot O(\log(N_{\cF} TH /\delta))\Big),
    \end{align*}
    where $d(\cF,\lambda,T)=\sum_{h=1}^Hd(\cF_h,\lambda,T)$.
\end{proof}

\section{Proof of Technical Lemmas}\label{s:Technical lemma proof}
\begin{proof}[Proof of Lemma~\ref{lem:computation1}]
    For the first equation, we notice that 
    $$
\begin{aligned}
        \frac{\partial Z_R(x)}{\partial R(x,a)} &= \frac{\partial \sum_{a' \in \cA} \pi_{\reff}(a'|x) \exp\big(\eta R(x,a')\big)}{\partial R(x,a)}, \\
        &= \frac{\partial \pi_{\reff}(a|x) \exp\big(\eta R(x,a)\big)}{\partial R(x,a)},\\
        &= \eta \pi_{\reff}(a|x) \exp\big(\eta R(x,a)\big).
\end{aligned}
    $$
    %\todoq{typo in the denominator}
For the second equation, with the derivation rule of $(\frac{f(z)}{g(z)})' = \frac{f'(z)g(z) - f(z)g'(z)}{(g(z))^2}$, we have
$$
\begin{aligned}
     \frac{\partial \pi^\eta_R(a|x)}{\partial R(x,a)} &= \frac{\partial \big[\pi_{\reff}(a|x) \exp\big(\eta R(x,a)\big) / Z_R(x)\big]}{\partial R(x,a)}\\
     &= \frac{\eta \pi_{\reff}(a|x) \exp\big(\eta R(x,a)\big) Z_R(x) - \partial Z_R(x) \cdot \pi_{\reff}(a|x) \exp\big(\eta R(x,a)\big)}{Z_R(x)^2}\\
     &= \eta \pi_R^\eta(a|x) - \eta \pi_R^\eta(a|x)^2,
\end{aligned}
$$
where we use the first equation in the last step. Finally, for the last equation, we have
$$
\begin{aligned}
     \frac{\partial \pi^\eta_R(a'|x)}{\partial R(x,a)} &= \frac{\partial \big[\pi_{\reff}(a'|x) \exp\big(\eta R(x,a')\big) / Z_R(x)\big]}{\partial R(x,a)} \\
     &= -\frac{\pi_{\reff}(a'|x) \exp\big(\eta R(x,a')\big) \cdot \partial Z_R(x)}{Z_R(x)^2}\\
     &= -\eta \pi_R^\eta(a|x) \cdot \pi_R^\eta(a'|x).
\end{aligned}
$$
\end{proof}

We follow Theorem 13.15 from \citet{zhang2023mathematical} and \citet{ye2023corruption} to derive the in-sample error for the function.
\begin{lemma}\label{lm:Generalization error of reward function}
    Consider a function space $\cF:\cZ\rightarrow\RR$ and a filtered sequence $\{z_t,\epsilon_t\}\in\cZ\times\RR$ so that $\epsilon_t$ is conditional zero-mean $\eta$-sub-Gaussian noise. Suppose that $\cF$ is a finite space with cardinality $N_{\cF}$ For $f_*(\cdot):\cZ\rightarrow\RR$, suppose that $y_t=f_*(x_t,a_t)+\epsilon_t$. If $\hat{f}_t$ is an ERM solution:
$$
\hat{f}_t = \argmin_{f\in\cF}\sum_{i=1}^t(f(z_i)-y_i)^2,
$$
with probability at least $1-\delta$, we have for all $t\in[T]$,
\begin{align*}
    \sum_{i=1}^t(\hat{f}_t(z_i)-f_*(z_i)^2 \le& 8\eta^2\log(N_{\cF}T/\delta) .
\end{align*}
\end{lemma}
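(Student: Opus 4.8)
The plan is to follow the classical in-sample analysis of least-squares regression with a finite hypothesis class, as in Theorem 13.15 of \citet{zhang2023mathematical}. First I would exploit the optimality of the ERM solution $\hat f_t$: since $f_*\in\cF$ by realizability, $\sum_{i=1}^t(\hat f_t(z_i)-y_i)^2 \le \sum_{i=1}^t(f_*(z_i)-y_i)^2$. Writing $y_i = f_*(z_i)+\epsilon_i$ and expanding both squares, the pure-noise terms $\epsilon_i^2$ cancel, leaving the \emph{basic inequality}
\[
\sum_{i=1}^t\bigl(\hat f_t(z_i)-f_*(z_i)\bigr)^2 \le 2\sum_{i=1}^t \epsilon_i\bigl(\hat f_t(z_i)-f_*(z_i)\bigr).
\]
This reduces the problem to controlling the martingale-type cross term on the right uniformly over the data-dependent choice $\hat f_t\in\cF$.

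Next I would bound this cross term for a \emph{fixed} $f\in\cF$. Setting $w_i := f(z_i)-f_*(z_i)$, which is measurable with respect to the information available before $\epsilon_i$ is drawn, the conditional $\eta$-sub-Gaussianity of $\epsilon_i$ gives $\E[\exp(2\lambda\epsilon_i w_i)\mid \cF_{i-1}]\le \exp(2\lambda^2\eta^2 w_i^2)$ for any $\lambda>0$. Consequently the process $\exp\bigl(\sum_{i\le t}[2\lambda\epsilon_i w_i - 2\lambda^2\eta^2 w_i^2]\bigr)$ is a nonnegative supermartingale started at $1$, and Markov's inequality yields, with probability at least $1-\delta'$,
\[
\sum_{i=1}^t \epsilon_i w_i \le \lambda\eta^2\sum_{i=1}^t w_i^2 + \frac{\log(1/\delta')}{2\lambda}.
\]
A union bound over the $N_\cF$ functions in $\cF$ and over $t\in[T]$ (taking $\delta'=\delta/(N_\cF T)$) makes this self-normalized bound hold simultaneously for every $f\in\cF$ and every $t$, hence in particular for the random estimator $\hat f_t$.

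Finally I would combine the two ingredients. Substituting $\hat f_t$ into the basic inequality and the concentration bound (so $w_i=\hat f_t(z_i)-f_*(z_i)$) gives
\[
\sum_{i=1}^t w_i^2 \le 2\lambda\eta^2\sum_{i=1}^t w_i^2 + \frac{\log(N_\cF T/\delta)}{\lambda}.
\]
Choosing $\lambda = 1/(4\eta^2)$ makes the coefficient $2\lambda\eta^2=\tfrac12$, so the variance term on the right is absorbed into the squared error on the left; rearranging yields $\sum_{i=1}^t w_i^2 \le 8\eta^2\log(N_\cF T/\delta)$, exactly the claimed bound.

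The main subtlety is the uniformity over the data-dependent estimator $\hat f_t$, which depends on the entire sample and so cannot be plugged into a single martingale bound directly. The \emph{self-normalized} structure of the concentration inequality---deviation measured against $\sum_i w_i^2$ with the \emph{same} free parameter $\lambda$ for every $f$---is precisely what allows a union bound over the finite class $\cF$ to cover $\hat f_t$ and what lets the variance proxy be reabsorbed into the left-hand side. I would also track carefully where realizability (for the basic inequality) and the conditional zero-mean property (for the supermartingale) are used.
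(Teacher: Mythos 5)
Your proof is correct and takes essentially the same route as the paper's: both combine ERM optimality with a conditional sub-Gaussian exponential supermartingale argument (Chernoff/Ville plus a union bound over $\cF$ and $t\in[T]$), and your choice $\lambda=1/(4\eta^2)$ makes your supermartingale coincide exactly with the one the paper obtains by applying Lemma~\ref{lm:th13_9} to $\phi(f,z_t)$ with $a=\eta^{-2}/4$. The only difference is organizational: you invoke ERM first to get the basic inequality $\sum_i w_i^2 \le 2\sum_i \epsilon_i w_i$ and then absorb the variance proxy, whereas the paper concentrates the loss difference $\phi$ directly and invokes ERM ($\sum_i \phi(\hat f_t,z_i)\ge 0$) at the very end.
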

\begin{proof}
For $f\in\cF$, define
\begin{align*}
    \phi(f,z_t)&= -a\left[ (f(z_t)-y_t)^2-(f_*(z_t)-y_t)^2\right]\\
    &= -a\left[ (f(z_t)-f_*(z_t)-\epsilon_t)^2-\epsilon_t^2\right]\\
    &= 2a(f(z_t)-f_*(z_t))\cdot\epsilon_t -a(f(z_t)-f_*(z_t))^2,
\end{align*}
where $a=\eta^{-2}/4$. Since $\epsilon_t$ is $\eta$-sub-Gasussian conditional on $z_t,\cS_{t-1}:=\{z_i,\epsilon_i\}_{i\in[t-1]}$, we get
\begin{align*}
    &\log\E_{y_t}\big[\exp\big(\phi(f,z_t)+a(f(z_t)-f_*(z_t))^2\big)\big]\\
    &\le \frac{4a^2(f(z_t)-f_*(z_t))^2\eta^2}{2} = \frac{(f(z_t)-f_*(z_t))^2}{8\eta^2},
\end{align*}
which means that
\begin{align*}
    \log\E_{y_t}\big[\exp\big(\phi(f,z_t)\big)\big] \le \frac{(f(z_t)-f_*(z_t))^2}{8\eta^2} - \frac{(f(z_t)-f_*(z_i))^2}{4\eta^2} = -\frac{(f(z_t)-f_*(z_t))^2}{8\eta^2}.
\end{align*}
Then, by Lemma \ref{lm:th13_9} with $\lambda=1$, we have for all $f\in\cF$ and $t\in[T]$, with probability at least $1-\delta$,
\begin{align*}
    \sum_{i=1}^t\phi(f,z_i) \le& \log\E_{y_t}\big[\exp\big(\phi(f,z_t)\big)\big]  + \log(N_{\cF}T/\delta)\\
    \le& -\frac{(f(z_t)-f_*(z_t))^2}{8\eta^2} + \log(N_{\cF}T/\delta).
\end{align*}
Additionally, since $\hf_t$ is the ERM solution, we have
\begin{align*}
    \sum_{i=1}^t\phi(\hf_t,z_i) = a\Big[\sum_{i=1}^t(f_*(z_i)-y_i)^2 - \sum_{i=1}^t(\hf_t(z_i)-y_i)^2\Big] \ge 0.
\end{align*}
Combining the results above leads to
\begin{align*}
    \sum_{i=1}^t(\hat{f}_t(z_i)-f_*(z_i)^2 \le& 8\eta^2\log(N_{\cF}T/\delta).
\end{align*}
\end{proof}

\section{Auxliary Lemmas}

\begin{lemma}[\citealt{russo2013eluder}]\label{lm:th13_9}
Consider a sequence of random variables $\{Z_t\}_{t\in\mathbb N}$ adapted to the filtration $\{\cS_t\}_{t\in\mathbb N}$ and a function $f\in\cF$. For any $\lambda>0$, with probability at least $1-\delta$, for all $t\ge1$, we have
$$
-\sum_{s=1}^tf(Z_s)-\frac{1}{\lambda}\sum_{s=1}^t\log\E[e^{-\lambda f(Z_s)}|\cS_{s-1}]\le \frac{\log(1/\delta)}{\lambda} 
$$
\end{lemma}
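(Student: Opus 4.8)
The plan is to establish this uniform-in-$t$ tail bound by exhibiting a nonnegative exponential martingale and applying Ville's maximal inequality. First I would define, for each $t \ge 0$, the process
$$
M_t = \exp\Big(-\lambda \sum_{s=1}^t f(Z_s) - \sum_{s=1}^t \log \E\big[e^{-\lambda f(Z_s)} \,\big|\, \cS_{s-1}\big]\Big),
$$
with $M_0 = 1$, and observe that it factorizes as $M_t = \prod_{s=1}^t W_s$ with $W_s = e^{-\lambda f(Z_s)} / \E[e^{-\lambda f(Z_s)} \mid \cS_{s-1}]$, so that $M_t = M_{t-1} W_t$. This exponential reweighting is designed precisely so that each factor is normalized to have conditional mean one.

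The key step is to verify that $\{M_t\}$ is a martingale with respect to the filtration $\{\cS_t\}$. Since $Z_s$ is $\cS_s$-adapted and the denominator $\E[e^{-\lambda f(Z_s)} \mid \cS_{s-1}]$ is $\cS_{s-1}$-measurable, the partial product $M_{t-1}$ is $\cS_{t-1}$-measurable and can be pulled out of the conditional expectation, giving
$$
\E[M_t \mid \cS_{t-1}] = M_{t-1} \cdot \frac{\E\big[e^{-\lambda f(Z_t)} \mid \cS_{t-1}\big]}{\E\big[e^{-\lambda f(Z_t)} \mid \cS_{t-1}\big]} = M_{t-1}.
$$
Hence $M_t$ is a nonnegative martingale with $\E[M_t] = M_0 = 1$; for the stated inequality it in fact suffices to regard it as a nonnegative supermartingale.

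Next I would apply Ville's inequality: for a nonnegative supermartingale started at $M_0 = 1$, one has $\PP\big(\sup_{t \ge 0} M_t \ge 1/\delta\big) \le \delta$. Passing to the complementary event, with probability at least $1 - \delta$ we have $M_t < 1/\delta$ simultaneously for all $t \ge 1$. Taking logarithms and dividing through by $\lambda > 0$ turns this into
$$
-\sum_{s=1}^t f(Z_s) - \frac{1}{\lambda}\sum_{s=1}^t \log \E\big[e^{-\lambda f(Z_s)} \,\big|\, \cS_{s-1}\big] \le \frac{\log(1/\delta)}{\lambda},
$$
which is exactly the claim. The uniformity over all $t \ge 1$ comes entirely from the maximal inequality, not from a per-$t$ Markov bound followed by a union bound over $t$.

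The main obstacle, and really the only delicate point, is the measurability bookkeeping underlying the martingale identity: one must check that each normalizing denominator is $\cS_{s-1}$-measurable, finite, and strictly positive so that $W_s$ is well defined, and that $M_{t-1}$ may be factored out of $\E[\,\cdot \mid \cS_{t-1}]$. Mild integrability, namely finiteness of $\E[e^{-\lambda f(Z_s)} \mid \cS_{s-1}]$ (which holds here since $f$ ranges over a bounded class), is enough to make every step rigorous; the remainder is a routine telescoping factorization together with the standard maximal inequality.
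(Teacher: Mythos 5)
Your proof is correct, and it is exactly the standard argument behind this result: the paper itself does not prove the lemma but defers to Theorem 13.2 of \citet{zhang2023mathematical} (equivalently Lemma 34 of \citet{foster2023foundations}), whose proof is precisely your normalized exponential martingale $M_t=\prod_{s=1}^t e^{-\lambda f(Z_s)}/\E[e^{-\lambda f(Z_s)}\mid\cS_{s-1}]$ combined with Ville's maximal inequality to get uniformity over $t$. Your measurability and positivity checks on the normalizers are the right delicate points, and your observation that the bound is for a single fixed $f$ (with uniformity over $\cF$ obtained later by a union bound, as the paper does in Lemma~\ref{lm:Generalization error of reward function}) is consistent with how the lemma is used.
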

\begin{proof}
For a detailed proof, see Theorem 13.2 of \citet{zhang2023mathematical} or Lemma 34 of \citet{foster2023foundations}.
\end{proof}

\begin{lemma}[Online-to-batch conversion] \label{lem:online2batch}
If an algorithm has a sublinear regret of $c^\dagger \cdot \log T $, then the algorithm finds an $\epsilon$-optimal policy with at most $\widetilde\Theta\big(c^\dagger / \epsilon\big)$ samples, where $\widetilde\Theta$ omit logarithmic terms of $c^\dagger/\epsilon$. Here $c^\dagger$ is a problem-dependent constant. 
\end{lemma}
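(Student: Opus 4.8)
The plan is to invoke the standard online-to-batch conversion, taking care to account for the nonlinearity (concavity) of the KL-regularized objective $J$. First I would run the given algorithm for $T$ rounds to produce the sequence of policies $\pi_1,\ldots,\pi_T$, together with the assumed guarantee $\mathrm{Regret}(T)=\sum_{t=1}^T\bigl(J(\pi^*)-J(\pi_t)\bigr)\le c^\dagger\log T$. The output I would return is built from this trajectory of iterates, so that its quality can be tied directly to the average per-round regret $\mathrm{Regret}(T)/T$.

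The key observation is that the suboptimality of the output is controlled by the time-averaged regret. One clean route is to output a uniformly random iterate: then the expected suboptimality over the random choice is, by linearity of expectation, exactly $\E_{t\sim\mathrm{Unif}[T]}\bigl[J(\pi^*)-J(\pi_t)\bigr]=\mathrm{Regret}(T)/T\le c^\dagger\log T/T$. Alternatively, to obtain a single deterministic policy one can report the mixture $\hat\pi=\tfrac1T\sum_{t=1}^T\pi_t$ and invoke that $J$ is concave in $\pi$---the reward term is linear and $-\KL(\pi\|\pi_{\reff})$ is concave---so that Jensen's inequality gives $J(\hat\pi)\ge\tfrac1T\sum_{t=1}^TJ(\pi_t)$ and hence $J(\pi^*)-J(\hat\pi)\le\mathrm{Regret}(T)/T\le c^\dagger\log T/T$. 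Either way, the returned policy is $(c^\dagger\log T/T)$-suboptimal.

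It then remains only to choose $T$ so that $c^\dagger\log T/T\le\epsilon$, which is the single quantitative step. This amounts to inverting the transcendental inequality $T/\log T\ge c^\dagger/\epsilon$: taking $T=\Theta\bigl(\tfrac{c^\dagger}{\epsilon}\log\tfrac{c^\dagger}{\epsilon}\bigr)$ yields $\log T=\Theta\bigl(\log\tfrac{c^\dagger}{\epsilon}\bigr)$, so that $c^\dagger\log T/T=O(\epsilon)$, establishing that $T=\widetilde\Theta(c^\dagger/\epsilon)$ samples suffice, with the $\widetilde\Theta$ absorbing the $\log(c^\dagger/\epsilon)$ factor. The main obstacle here is bookkeeping rather than conceptual: one must check that the doubly-logarithmic slack incurred when inverting $T/\log T$ is swallowed by $\widetilde\Theta$, and---if the mixture interpretation is used---confirm the concavity of $J$ so that Jensen applies in the favorable direction. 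Both are routine; the substance of the lemma is the elementary fact that regret divided by horizon upper-bounds the suboptimality of a random (or averaged) iterate.
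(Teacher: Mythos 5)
Your proposal is correct and follows essentially the same route as the paper: output the uniform mixture of the iterates $\pi^1,\ldots,\pi^T$, bound its suboptimality by the average regret $c^\dagger\log T/T$, and invert $T/\log T\ge c^\dagger/\epsilon$ to get $T=\widetilde\Theta(c^\dagger/\epsilon)$ (the paper does this inversion via the Lambert $W$ function, you via an explicit choice of $T$, which is immaterial). Your extra care about the Jensen/concavity direction for the deterministic mixture is a nice touch that the paper glosses over by writing the mixture value as an exact average, but it does not change the argument.
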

\begin{proof}
        We denote the policy sequence as $\{\pi^1,\cdots,\pi^T\}$. Then, by definition of regret, we know 
    $$
    \begin{aligned}
        \mathrm{Regret}(T) &= T \cdot V_1^*(x_1) - \sum_{t=1}^T V_1^{\pi^t}(x_1)\\
        &\leq c^\dagger \cdot \log T.
    \end{aligned}
    $$
    We consider the uniform policy $\tilde{\pi} := \mathrm{Uniform}(\pi^1, \cdots, \pi^T)$. It follows that 
$$
V^*_1(x_1) - V^{\tilde{\pi}}_1(x_1) = V^*_1(x_1) - \frac{1}{T} \sum_{t=1}^T V_1^{\pi^t}(x_1) \leq c^\dagger \cdot \frac{\log T}{T}.
$$
It suffices to prove that
$$
c^\dagger \cdot \frac{\log T}{T} \le \epsilon,
$$
which is equivalent to solving
$$
T \le \exp(T\cdot \epsilon/c^\dagger).
$$
By using the Lambert W function\footnote{\url{https://en.wikipedia.org/wiki/Lambert_W_function}}, we can prove that
$$
T \ge \frac{W(1)c^\dagger}{\epsilon},
$$
where $W(1)\ge \log (1/\epsilon) - \log\log (1/\epsilon)$.

% By taking the logarithm on both sides of the inequality above, the inequality is equivalent to
% $$
% \log T - \log\log T \ge \log\Big(\frac{c^\dagger}{\epsilon}\Big).
% $$
% When $T\ge c\cdot c^\dagger/\epsilon$, we have
% \begin{align}
%     \log T - \log\log T \ge& \log c + \log\Big(\frac{c^\dagger}{\epsilon}\Big) - \log\Big(\log c + \log\Big(\frac{c^\dagger}{\epsilon}\Big)\Big)\notag\\
%     =& \log\Big(\frac{c^\dagger}{\epsilon}\Big) + \log\frac{c}{\log c + \log\Big(\frac{c^\dagger}{\epsilon}\Big)}.\label{eq:aax}
% \end{align}
% Hence, by taking $c= \log^2(c^\dagger/\epsilon)$, we can get
% $$
% \frac{c}{\log c + \log\big(c^\dagger/\epsilon\big)} = \frac{\log^2(c^\dagger/\epsilon)}{2\log \log(c^\dagger/\epsilon) + \log(c^\dagger/\epsilon)} \ge 3^{-1}\log (c^\dagger/\epsilon) \ge 1,
% $$
% where the last inequality
\end{proof}

\end{document}